\newlength\tindent
\def\eqref#1{equation~\ref{#1}}
\def\1{\bm{1}}
\DeclareMathAlphabet{\mathsfit}{\encodingdefault}{\sfdefault}{m}{sl}
\SetMathAlphabet{\mathsfit}{bold}{\encodingdefault}{\sfdefault}{bx}{n}
\newcommand{\E}{\mathbb{E}}
\DeclareMathOperator*{\argmin}{arg\,min}
\definecolor{grey}{rgb}{0.33, 0.33, 0.33}
\newcommand{\squishlist}{
\begin{list}{{{\small{$\bullet$}}}}
{\setlength{\itemsep}{1pt}      \setlength{\parsep}{0pt}
\setlength{\topsep}{-2pt}       \setlength{\partopsep}{0pt}
\setlength{\leftmargin}{1em} \setlength{\labelwidth}{1em}
\setlength{\labelsep}{0.5em} } }
\newcommand{\squishend}{  \end{list}  }
\renewcommand*\env@matrix[1][*\c@MaxMatrixCols c]{%
  \hskip -\arraycolsep
  \let\@ifnextchar\new@ifnextchar
  \array{#1}}
\newcommand\hcancel[2][black]{\setbox0=\hbox{$#2$}%
\rlap{\raisebox{.45\ht0}{\textcolor{#1}{\rule{\wd0}{1pt}}}}#2} \usepackage{graphicx} 
\newcommand{\D}{\mathcal D}
\newcommand{\K}{\mathcal K}
\newcommand{\infl}{\text{infl}}
\newcommand{\p}{\mathbb P}
\newcommand{\err}{\mathsf{err}}
\newtheorem{theorem}{Theorem}
\newtheorem{proposition}{Proposition}
\newtheorem{definition}{Definition}
\newcommand{\para}[1]{{\vspace{2pt} \noindent \textbf{#1}}}
\newcommand{\eg}{{e.g.\ }}
\newcommand{\ie}{{i.e.\ }}
\newcommand{\etc}{{etc.}}
\title{On the Cause of Unfairness: A Training Sample Perspective}
\author{Yuanshun Yao \quad Yang Liu\\
~\\
\{kevin.yao, yang.liu01\}@bytedance.com\\
ByteDance Research, San Jose, CA}
\date{}
\begin{document}

\maketitle
\begin{abstract}
\noindent Identifying the causes of a model's unfairness is an important yet relatively unexplored task. We look into this problem through the lens of training data -- the major source of unfairness. We ask the following questions: How would the unfairness of a model change if its training samples (1) were collected from a different (e.g. demographic) group, (2) were labeled differently, or (3) whose features were modified? In other words, we quantify the influence of training samples on unfairness by counterfactually changing samples based on predefined concepts, i.e. data attributes such as features ($X$), labels ($Y$), and sensitive attributes ($A$). Our framework not only can help practitioners understand the observed unfairness and mitigate it by repairing their training data, but also leads to many other applications, e.g. detecting mislabeling, fixing imbalanced representations, and detecting fairness-targeted poisoning attacks.

\end{abstract}

\section{Introduction}

A fundamental question in machine learning fairness is: What causes the unfairness? Without the answer, it is hard to understand and fix the problem. In practice, this is also one of the first questions the practitioners would ask after measuring fairness. Although the question sounds simple, it is hard to identify the exact source of unfairness in the machine learning pipeline.\footnote{As admitted by many leading fairness practitioners, \eg Meta blog~\citep{meta} states: ``Unfairness in an AI model could have many possible causes, including not enough training data, a lack of features, a misspecified target of prediction, or a measurement error in the input features. Even for the most sophisticated AI researchers and engineers, these problems are not straightforward to fix.''}

The problem is difficult since the sources of unfairness are many, including data sampling bias or under-representation \citep{chai2022fairness,zhu2021rich,celis2021fair,bagdasaryan2019differential}, data labeling bias \citep{wang2021fair,wu2022fair,fogliato2020fairness}, model architecture (or feature representation) \citep{adel2019one,madras2018learning,zemel2013learning,song2019learning,xing2021fairness,li2021dyadic,song2021deep,li2020deep}, distribution shift \citep{ding2021retiring,chen2022fairness,rezaei2021robust,giguere2022fairness} \etc{}

In this work, we tackle this problem by studying the most important and obvious source of bias: training samples. The rationale is simple: if the model's training samples are biased, it would be unlikely that the model can still remain unbiased. 

To illustrate how we choose which perspective of the training data to study, consider the conventional fairness measure Equality of Opportunity~\citep{hardt2016equality}:
\begin{equation}
\big| \mathbb{P}(h_\theta(X) = 1 | A = 0, Y=1)-\mathbb{P}(h_\theta(X) = 1 | A = 1, Y=1) \big|
\end{equation}
where $X$ is feature, $Y$ is label, $A$ is sensitive attribute, and $h_\theta(X)$ is model $\theta$'s predicted label given $X$. It is clear the underlying fairness is related to feature $X$, label $Y$, and sensitive attribute $A$. We, therefore, ask the following logical question regarding how training samples would impact the model's unfairness: How a model's (un)fairness would change if its training samples (1) were collected from a different (\eg demographic) group (i.e. sensitive attribute $A$ is changed), (2) were labeled differently, (i.e. label $Y$ is changed) or (3) were modified for some features (i.e. feature $X$ is changed)?

However, the relationship between a training sample's attribute (i.e. $X$, $Y$, or $A$) and the measured fairness cannot be derived precisely. For example, if we change a training sample's feature $X$, the trained model weights $\theta$ will also change, and how it would impact the measured fairness on $\theta$ is too complex to derive precisely.

To this end, we propose a framework based on \textit{influence function}~\citep{cook1982residuals,koh2017understanding} to measure a training attribute's impact on fairness. Influence function provides a tool to quantify the change in the underlying fairness with respect to a training sample. We adapt it to define the influence on fairness measure w.r.t a training \textit{concept}, \ie a categorical variable that describes data property like data attribute.

For example, we can choose the concept to be the sensitive group attribute $A$ and counterfactually\footnote{We use the word ``counterfactual'' in its literal sense, \ie being different from the factual world, in the same \textit{empirical} fashion of the counterfactual example or the counterfactual explanation~\citep{verma2020counterfactual,ustun2019actionable,roese1997counterfactual} rather than in the rigorous and theoretical sense of ``counterfactual'' in the causal inference. Our work does \textit{not} belong to the area of causal inference.} change it to answer the question ``What is the impact on fairness if the training data were sampled from a different group?'' Or we can choose the concept to be the training labels $Y$, and then our method measures the impact on fairness when the label is changed. 

Since influence function can only be defined on a sample, the key challenge is to obtain \textit{counterfactual sample}, i.e. the counterfactual version of a sample as if its concept were changed. For example, if a training sample's sensitive attribute $A$ is changed from female to male, how would the rest of the sample (e.g. feature $X$ and label $Y$) change accordingly? Without considering the dependency between data attributes, we would fail to cover the true nature of training data. In addition, fairness is often influenced by the second- or third-order consequence of the change.\footnote{For example, changing sensitive attribute $A$ of a training sample does not impact the measured fairness because the model is only trained on feature $X$ and label $Y$ (if the training feature $X$ does not include the sensitive attribute $A$), therefore the change on $A$ has to be propagated to the model and the fairness through intermediate $X$ and $Y$.} The problem might sound similar in the causal inference literature \citep{pearl2010causal}. However, it is notoriously hard to approximate counterfactuals in the causal inference, and in our case it is unclear whether the problem is identifiable or not \citep{zhang2009identifiability,zhang2012identifiability,shimizu2006linear,hoyer2008nonlinear}. We, therefore, develop a set of heuristic approximations, notably with W-GAN~\citep{arjovsky2017wasserstein} and \textit{optimal transport mapping}~\citep{villani2009optimal}, to estimate counterfactual samples empirically.

Our flexible framework generalizes the prior works that only consider removing or reweighing training samples~\citep{wang2022understanding,li2022achieving}, and we can provide a broader set of explanations and give more insights to practitioners in a wider scope (\eg what if a data pattern is drawn from another demographic group?). We name our influence framework as \textit{Concept Influence for Fairness} (CIF).

In addition to explaining the cause of unfairness, CIF can also recommend practitioners ways to fix the training data to improve fairness by counterfactually changing concepts in training data. Furthermore, we demonstrate the power of our framework in a number of other applications including (1) detecting mislabeling, (2) detecting poisoning attacks, and (3) fixing imbalanced representation. Through experiments on 4 datasets -- including synthetic, tabular, and image -- we show that our method achieves satisfactory performance in a wide range of tasks.

\section{Influence of Training Concepts}
\label{sec:pred}
We start with introducing the influence function for fairness, the concept in training data, and define our \textit{Concept Influence for Fairness} (CIF).
\subsection{Fairness Influence Function}
We start with introducing the influence function that quantifies the impact of perturbing training samples on the group fairness metrics.

\para{Influence Function on Group Fairness.} Denote the training data by $D_{train} = \{z_i^{tr} = (x_i^{tr}, y_i^{tr})\}_{i=1}^n$ and the validation data by $D_{val} = \{z_i^{val} = (x_i^{val}, y_i^{val})\}_{i=1}^n$.  
Suppose the model is parameterized by $\theta \in \Theta$, and there exists a subset of training data with sample indices $\mathcal K = \{K_1,...,K_k\}$. If we perturb a group $\mathcal K$ by assigning each sample $i \in \mathcal K$ with weight $w_i \in [0, 1]$, denote the resulting counterfactual model's weights by $\hat{\theta}_{\K}$.

\begin{definition} The fairness influence of reweighing group $\mathcal K$ in the training data is defined as the difference of fairness measure between the original model $\hat{\theta}$ (trained on the full training data) and the counterfactual model $\hat{\theta}_{\K}$:
\begin{align}
\infl(D_{val}, \K, \hat{\theta}) := \ell_{\text{fair}}(\hat{\theta})-\ell_{\text{fair}}(\hat{\theta}_{\K})  
\end{align}
\end{definition}
where $\ell_{fair}$ is the fairness measure (will be specified shortly after).

Similar to~\citep{koh2017understanding,koh2019accuracy,li2022achieving}, we can derive the closed-form solution of fairness influence function (see Appendix~\ref{app:infl} for the derivation):

\begin{proposition}
The first-order approximation of $\infl(D_{val}, \K, \hat{\theta}) $ takes the following form:
\begin{equation}
\label{eq:fairinfl}
\infl(D_{val}, \K, \hat{\theta}) \approx -\nabla_{\theta} \ell_{\text{fair}}(\hat{\theta})^{\intercal} H^{-1}_{\hat{\theta}} \left(\sum_{i \in \K} 
 w_i \nabla \ell(z^{tr}_i; \hat{\theta}) \right)
\end{equation}
where $H_{\hat{\theta}}$ is the hessian matrix \ie $H_{\hat{\theta}}:=\frac{1}{n}\nabla^2 \sum_{i=1}^n \ell(z^{tr}_i; \hat{\theta})$, and $\ell$ is the original loss function (e.g. cross-entropy loss in classification).
\end{proposition}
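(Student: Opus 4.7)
The plan is to adapt the classical Koh-Liang influence-function derivation by swapping in the fairness loss $\ell_{\text{fair}}$ on the validation side. Concretely, I would view $\hat{\theta}_{\K}$ as the minimiser of a perturbed empirical risk, write down its first-order optimality condition, Taylor-expand around the unperturbed minimiser $\hat{\theta}$ to get a closed-form expression for $\hat{\theta}_{\K}-\hat{\theta}$, and then push this parameter perturbation through a first-order expansion of $\ell_{\text{fair}}$.

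First I would make the reweighing explicit by writing the counterfactual objective as
\begin{align*}
R_{\K}(\theta) \;=\; \frac{1}{n}\sum_{i=1}^n \ell(z_i^{tr};\theta) \;-\; \sum_{i \in \K} w_i\, \ell(z_i^{tr};\theta),
\end{align*}
so that $w_i=0$ recovers the original training objective and increasing $w_i$ corresponds to downweighting sample $i$. Using the stationarity condition $\nabla_\theta R_{\K}(\hat{\theta}_{\K})=0$, Taylor-expanding each gradient around $\hat{\theta}$, and cancelling $\tfrac{1}{n}\sum_i \nabla\ell(z_i^{tr};\hat{\theta})=0$ (since $\hat{\theta}$ is a stationary point of the unperturbed empirical risk) yields, to leading order,
\begin{align*}
0 \;\approx\; H_{\hat{\theta}}\,(\hat{\theta}_{\K} - \hat{\theta}) \;-\; \sum_{i \in \K} w_i\, \nabla \ell(z_i^{tr}; \hat{\theta}).
\end{align*}
Assuming $H_{\hat{\theta}}$ is positive definite (the standard regularity assumption, ensured by convexity of $\ell$ or by an $\ell_2$ regulariser), I can invert it to obtain $\hat{\theta}_{\K}-\hat{\theta} \approx H_{\hat{\theta}}^{-1}\sum_{i \in \K} w_i\,\nabla\ell(z_i^{tr};\hat{\theta})$.

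Next I would Taylor-expand the validation-side fairness loss around $\hat{\theta}$,
\begin{align*}
\ell_{\text{fair}}(\hat{\theta}_{\K}) - \ell_{\text{fair}}(\hat{\theta}) \;\approx\; \nabla_\theta \ell_{\text{fair}}(\hat{\theta})^{\intercal}\,(\hat{\theta}_{\K} - \hat{\theta}),
\end{align*}
substitute the expression from the previous step, and negate (because $\infl$ is defined as $\ell_{\text{fair}}(\hat{\theta})-\ell_{\text{fair}}(\hat{\theta}_{\K})$) to obtain the stated formula.

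The main technical obstacle is justifying the two linearisations: both are formal first-order Taylor expansions, valid only when the perturbation weights $\{w_i\}_{i\in\K}$ are small enough that $\hat{\theta}_{\K}$ stays in a neighbourhood of $\hat{\theta}$ on which the empirical risk is strongly convex and $\ell, \ell_{\text{fair}}$ are sufficiently smooth; in particular, dropping the $\sum_{i\in\K} w_i \nabla^2\ell(z_i^{tr};\hat{\theta})(\hat{\theta}_{\K}-\hat{\theta})$ term relies on treating products of $w_i$ and $\|\hat{\theta}_{\K}-\hat{\theta}\|$ as higher order. Outside the convex regime, the argument instead becomes a local quadratic model at $\hat{\theta}$ with a possibly damped or pseudo-inverse Hessian, which is the caveat inherited from \cite{koh2017understanding}. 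A secondary subtlety worth stating explicitly is that common group-fairness measures involve indicator functions of predictions and are not differentiable in $\theta$; the gradient $\nabla_\theta \ell_{\text{fair}}(\hat{\theta})$ therefore implicitly assumes a smooth surrogate (e.g.\ a sigmoid relaxation) so that the first-order expansion makes sense.
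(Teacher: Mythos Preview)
Your proposal is correct and follows essentially the same route as the paper's derivation in Appendix~\ref{app:infl}: first-order optimality of the perturbed objective, Taylor-expand the gradient around $\hat{\theta}$, use $\nabla R(\hat{\theta})=0$, invert $H_{\hat{\theta}}$, and then chain-rule through $\ell_{\text{fair}}$. The only cosmetic difference is that the paper introduces an explicit scalar $\epsilon$ multiplying the perturbation and differentiates at $\epsilon=0$, whereas you treat the $w_i$ themselves as the small parameters; your added remarks on Hessian invertibility and the need for a smooth surrogate of $\ell_{\text{fair}}$ are welcome clarifications that the paper leaves implicit.
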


\para{Approximated Fairness Loss.} The loss $\ell_{fair}(\hat{\theta})$ quantifies the fairness of a trained model $\hat{\theta}$. Similarly to prior work~\citep{wang2022understanding,sattigeri2022fair}, we can approximate it with a surrogate loss on the validation data. Denote the corresponding classifier for $\theta$ as $h_\theta$, we can approximate the widely used group fairness Demographic Parity~\citep{calders2009building,chouldechova2017fair} (DP) violation as the following (assume both $A$ and the classification task are binary):
\begin{equation}
\label{eq:fairDP}
\begin{aligned}
    & \ell_{DP}(\hat{\theta}) := \big| \mathbb{P}(h_\theta(X) = 1 | A = 0)-\mathbb{P}(h_\theta(X) = 1 | A = 1) \big| \\
    &\approx \Bigg| \frac{\sum_{i \in D_{val}: a_i = 0} g( z^{val}_i; \theta)}{\sum_{i \in D_{val}}{\mathbb{I}[a_i = 0]}} - \frac{\sum_{i \in D_{val}: a_i = 1} g( z^{val}_i; \theta)}{\sum_{i \in D_{val}}{\mathbb{I}[a_i = 1]}} \Bigg|
\end{aligned}
\end{equation}
where $g$ is the logit of the predicted probability for class $1$. See Appendix~\ref{app:approx} for the approximated violation of Equality of Opportunity~\citep{hardt2016equality} (EOP) and Equality of Odds~\citep{woodworth2017learning} (EO).

\subsection{Concepts in Training Data}

\begin{figure*}
    \centering
    \begin{minipage}{\linewidth}
        \begin{subfigure}[t]{.32\linewidth}
            \includegraphics[width=\textwidth]{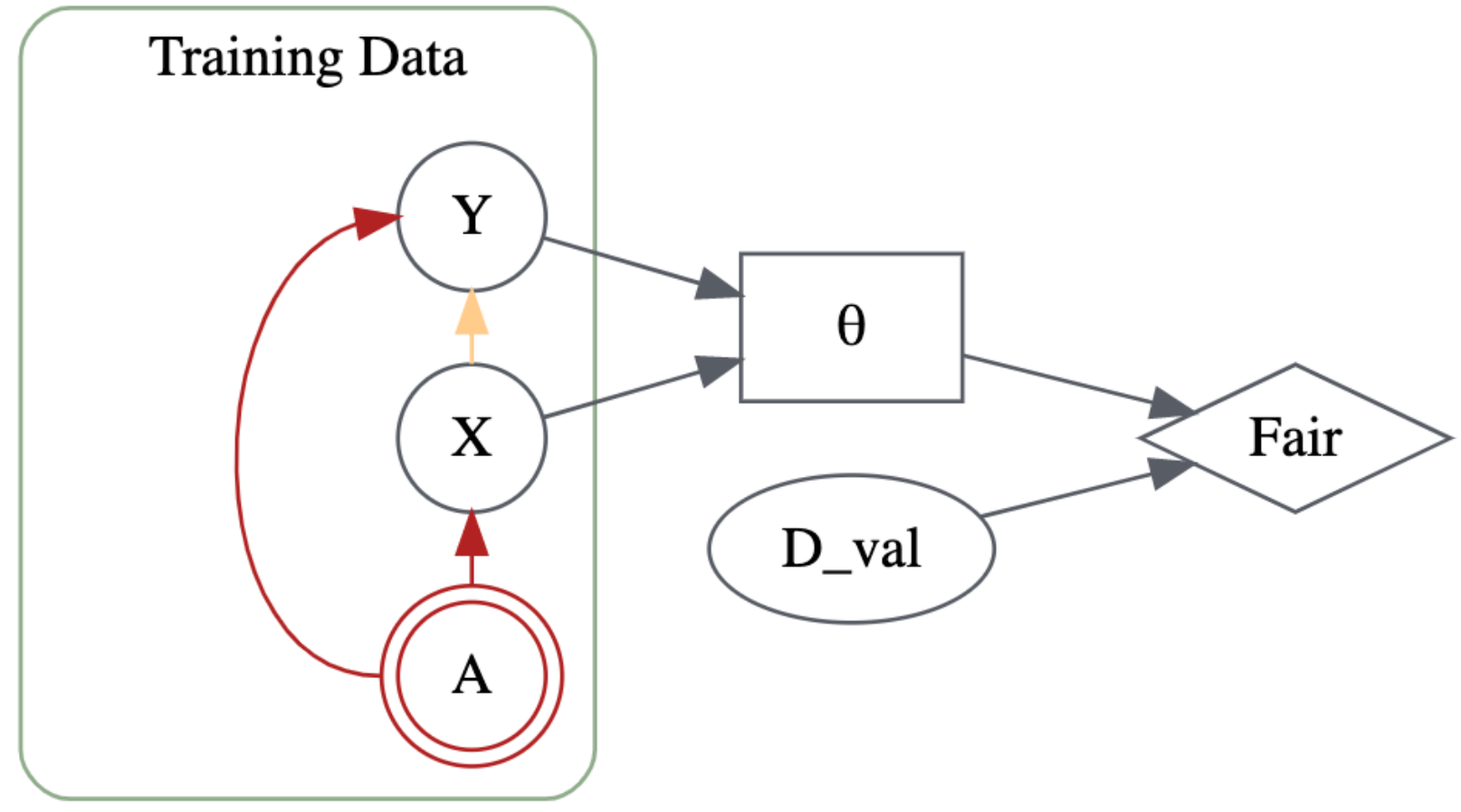}
            \caption{Overriding sensitive attribute $A$.}
            \label{fig:inter_A}
        \end{subfigure} 
        \begin{subfigure}[t]{.32\linewidth}
            \includegraphics[width=\textwidth]{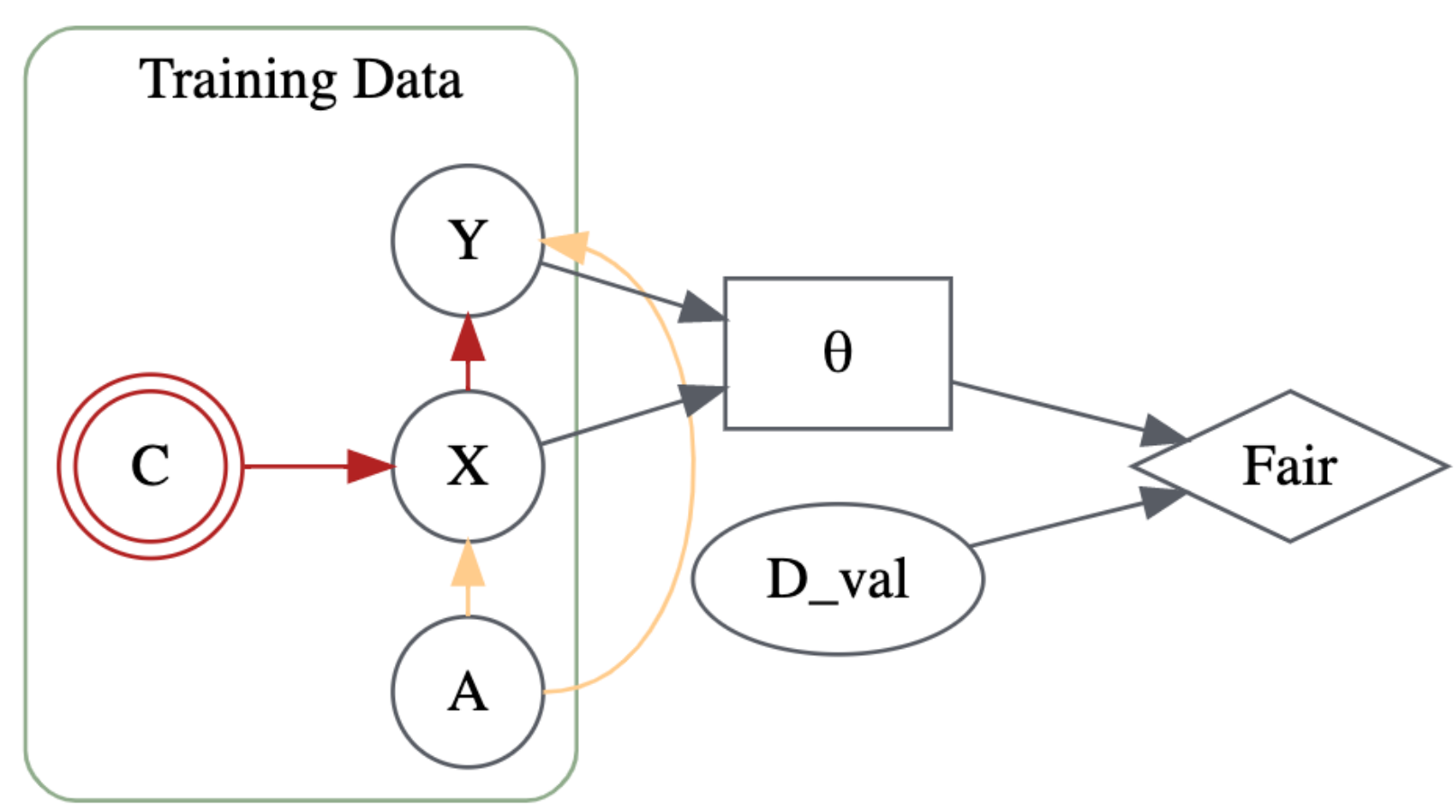}
            \caption{Overriding feature $X$. }
            \label{fig:inter_X}
        \end{subfigure} 
        \begin{subfigure}[t]{.32\linewidth}
            \includegraphics[width=\textwidth]{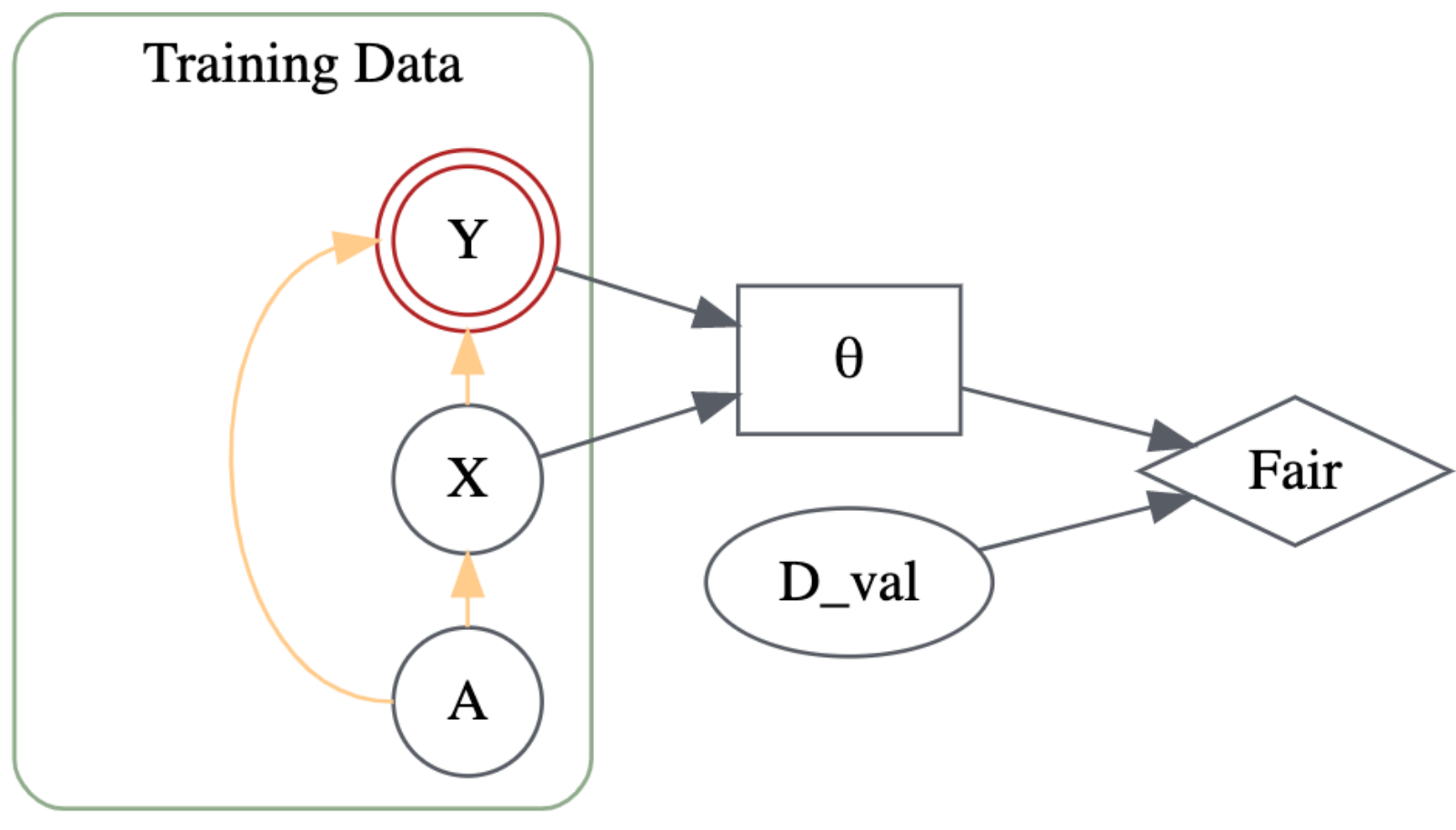}
            \caption{Overriding label $Y$.}
            \label{fig:inter_Y}
        \end{subfigure} 
    \end{minipage}
    \caption{Our data dependency assumption. Yellow arrows represent the data dependency link and red arrows represent the effect of overriding (\ie counterfactually changing the value of a concept). In training data, the concept variable $C$ can override sensitive attribute $A$ (i.e. Figure (a)), features $X$ (i.e. Figure (b)), and label $Y$ (i.e. Figure (c)). We train the model $\theta$ on $X$ and $Y$, and compute the validation fairness metric \texttt{Fair} on the validation dataset $D_{val}$.}
    \label{fig:causal}
\end{figure*}

A concept is a \textit{sample-level} categorical attribute associated with the training data. Formally, denote a concept by $C \in \mathcal C:=\{1,2,...,c\}$ where $C$ is a discrete concept encoded in the dataset $(X,Y,A)$. $C$ can simply be either $Y$ or $A$ or any feature in $X$ or in a broader definition. %
See Figure~\ref{fig:causal} for an illustration of the training concepts and our data dependency assumption. %
Our core idea is to quantify the influence when each training sample is replaced by its \textit{counterfactual sample}, \ie the counterfactual version of the sample if its concept were changed, when we transform the sample w.r.t. a certain concept. We use the term \textit{override} to mean counterfactually changing the concept, \eg overriding a sample's concept to $c$ means counterfactually setting its concept to $c$ and replacing the sample with its counterfactual version as if its concept were $c$. We will formally define overriding in Section~\ref{subsec:cif}.

\para{Examples.} We provide examples of concepts and motivate why transforming samples based on training concepts can be intuitively helpful for fairness.

\squishlist
    \item \para{Concept as Sensitive Attribute ($C=A$).} Intuitively speaking, the sensitive/group attribute relates closely to fairness measures due to its importance in controlling the sampled distribution of each group. Changing $A$ corresponds to asking counterfactually what if a similar or counterfactual sample were from a different sensitive group. %
    \item \para{Concept as Label ($C=Y$).} In many situations, there are uncertainties in the label $Y|X$. Some other times, the observed $Y$ can either encode noise, mislabeling or subjective biases. They can all contribute to unfairness. Changing $Y$ implies the counterfactual effect if we were to change the label (\eg a sampling, a historical decision, or a human judgment) of a sample. 
    \item \para{Concept as Predefined Feature Attribute ($C=attr(X)$).} Our framework allows us to predefine a relevant concept based on feature $X$. $C$ can be either an externally labeled concept (\eg sample-level label in image data) or a part of $X$ (\eg a categorical\footnote{All concepts in $X$, $Y$, or $A$ that we consider are assumed to be categorical because the continuous concept is not well-defined in the literature of concept.} feature in tabular data). For instance, if we want to understand how skin color would affect the model's fairness, and if so which data samples would impact the observed fairness the most w.r.t skin color, we can specify $C = attr(image) \in \{\text{dark}, \text{light}\}$. Then transforming w.r.t. this concept corresponds to identifying samples from different skin colors that, if were included in the training data, would lead to a fairer model.
    \item \para{Concept as Removal.} Our setting is also flexible enough to consider the effect of removing a training sample, as commonly considered in the literature on influence function~\citep{li2022achieving}. Consider a selection variable $S \in \{1,0\}$ for each instance $z^{tr}_i$, for each sample that appears in the training data we have $s_i = 1$. Changing to $s_i = 0$ means the sample is counterfactually removed, \ie $\hat{z}^{tr}_{i}(c') = \varnothing$. By allowing the removal concept, we can incorporate the prior works on the influence of removing samples into our framework.
\squishend

\subsection{Concept Influence for Fairness (CIF)}
\label{subsec:cif}

Our goal is to quantify the counterfactual effect of changing concept $c$ for each data sample $(x,y,a)$. %
Mathematically, denote by $(\hat{x}, \hat{y}, \hat{a})$ the counterfactual sample by overriding $c$. Consider a training sample $z^{tr}_{i} := (x_i,y_i,a_i,c_i)$, and define a counterfactual sample for $z^{tr}_{i}$ when counterfactually changing from $C=c$ to $C=c'$ as follows:
\begin{equation}
    \hat{x}(c'), \hat{y}(c'), \hat{a}(c') = \textit{transform}(
    X=x, Y=y, A=a, \textit{override}(C= c')), ~c' \neq c. %
\end{equation}
In the definition above, $\textit{override}(\cdot)$ operator counterfactually sets the value of the concept variable to a different $c'$. If differs from merely $C = c$ in the sense that the change on $C$ would also change other variables, \ie $A$, $X$, $Y$, or the action of removing samples. It also differs from the well-known \textit{do}-operator in the causal literature \citep{pearl2010causal}, in the sense that the procedure does not necessarily need to follow the mechanism of causal inference, and therefore can include any \textit{empirical} mechanisms that approximate counterfactuals. The difference is vital because when it is unclear whether the problem is identifiable or not \citep{zhang2009identifiability,zhang2012identifiability,shimizu2006linear,hoyer2008nonlinear}, we still want to develop heuristic approximations. And $\textit{transform}(\cdot)$ maps the original training samples to their corresponding counterfactual samples by considering the effect of $\textit{override}(\cdot)$.

Therefore, our $\textit{override}(\cdot)$ and $\textit{transform}(\cdot)$ can include both traditional causal inference methods when we deal with synthetic data, and, more importantly, empirical heuristics when identifiability is unclear. We include three typical scenarios:
\squishlist
    \item Assigning values: When we override label $Y$, we can simply set the label value (which is not a typical case in causal inference), \ie
\begin{equation}
\textit{transform}(X=x, Y=y, A=a, \textit{override}(Y= y'))
= (x, y', a)
\end{equation}
 \item Empirical approximation: When identifiability is unclear, \textit{which is the major case that we study}, we can approximate the counterfactual samples by training a generative model $G$,\footnote{In Section \ref{sec:cs}, we introduce how to construct $\textit{transform}(\cdot)$ in this case.} \ie
\begin{equation}
\textit{transform}(X=x, Y=y, A=a,\textit{override}(C= c'))
= \text{G}_{c \rightarrow c'} (x, y, a)
\end{equation}
    \item \textit{Do}-intervention: When the counterfactual distribution is theoretically identifiable, which is only in the synthetic setting, then the $\textit{transform}(\cdot)$ and $\textit{override}(\cdot)$ are the sampling functions and the \textit{do}-operator, \ie\footnote{The definition is slightly abused -- when $C$ overlaps with any of $(X,Y,A)$, the $do(\cdot)$ operation has a higher priority and is assumed to automatically overwrite the other dependencies. For example, when $C=A$, we have:
$\mathbb{P}\left(\hat{X}, \hat{Y}, \hat{A}|X=x, Y=y, A=a, do(C=c')\right) =   \mathbb{P}\left(\hat{X}, \hat{Y}, \hat{A}|X=x, Y=y, \hcancel[red]{A=a}, do(A=\hat{a})\right)$.}
\begin{equation}
    \begin{aligned}
    &\textit{transform}(X=x, Y=y, A=a, \textit{override}(C = c')) \\
    &= \hat{x}(c'), \hat{y}(c'), \hat{a}(c')\sim\mathbb{P}\left(\hat{X}, \hat{Y}, \hat{A}|X=x, Y=y, A=a, do(C=c')\right), ~c' \neq c.
    \end{aligned}
\end{equation}

\squishend

The effectiveness of our solution depends on finding a proper $\textit{transform}(\cdot)$, which is our work's focus, and the quality of $\textit{transform}(\cdot)$ can be verified empirically in experiments. In addition, our framework is general; if researchers discover better ways to approximate counterfactuals, they plug those into our framework easily.

\para{Concept Influence for Fairness.} Denote a counterfactual sample as 
$
    \hat{z}^{tr}_{i}(c') = (\hat{x}_i(c'), \hat{y}_i(c'), \hat{a}_i(c'), \hat{c}_i = c')
$. Then we define the counterfactual model when replacing $z^{tr}_{i} = (x_i,y_i,a_i,c_i)$ with $\hat{z}^{tr}_{i}(c') $ as:
\begin{align}
 \hat{\theta}_{i,c'} := \text{argmin}_{\theta} \{R(\theta) - \epsilon \cdot \ell(\theta, z^{tr}_{i}) +  \epsilon \cdot  \ell(\theta, \hat{z}^{tr}_{i}(c'))\}  
\end{align}

\begin{definition}[Concept Influence for Fairness (CIF)] The concept influence for fairness (CIF) of overriding on a concept $C$ to $c'$ in sample $i$ on the fairness loss $\ell_{\text{fair}}$ is defined as: 
\begin{align}
 \infl(D_{val}, \hat{\theta}_{i,c'}) := \ell_{\text{fair}}(\hat{\theta})  - \ell_{\text{fair}}(\hat{\theta}_{i,c'})   
\end{align}
\end{definition}
Based on Proposition \ref{eq:fairinfl}, we can easily prove (see Appendix~\ref{app:proof_fairinfl} for the proof):

\begin{proposition}
The concept influence for fairness (CIF) of a training sample $z^{tr}_{i}$ when counterfactually transformed to $\hat{z}^{tr}_{i}(c')$ based on the target concept $c'$ can be computed as:
\begin{equation}
\label{eq:infl_single}
\infl(D_{val}, \hat{\theta}_{i,c'}) \approx - \nabla_{\theta} \ell_{\text{fair}}(\hat{\theta})^{\intercal} H^{-1}_{\hat{\theta}} \left( \nabla \ell(z^{tr}_{i}; \hat{\theta}) - \nabla \ell(\hat{z}^{tr}_{i}(c'); \hat{\theta})  \right)
\end{equation}
\end{proposition}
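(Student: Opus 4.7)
The plan is to reduce Proposition 2 to Proposition 1 by recognizing the counterfactual replacement as a two-sample reweighing perturbation. By definition, $\hat{\theta}_{i,c'}$ is the minimizer of the perturbed empirical risk
$$R_\epsilon(\theta) := R(\theta) - \epsilon\,\ell(\theta, z^{tr}_i) + \epsilon\,\ell(\theta, \hat{z}^{tr}_i(c')),$$
which is structurally identical to the reweighing setup behind Proposition 1, except that one sample is downweighted while a fresh ``injected'' sample $\hat{z}^{tr}_i(c')$ is upweighted. Concretely, I would instantiate Proposition 1 with the group $\mathcal{K} = \{i, i'\}$ (where $i'$ indexes the counterfactual sample), assigning signed weights $w_i = -1$ and $w_{i'} = +1$, so that $\sum_{j\in\mathcal{K}} w_j \nabla\ell(z^{tr}_j; \hat{\theta})$ collapses to exactly $-\nabla\ell(z^{tr}_i;\hat{\theta}) + \nabla\ell(\hat{z}^{tr}_i(c'); \hat{\theta})$. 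Multiplying by the leading minus sign in Proposition 1 yields the bracketed difference stated in the claim.

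As an independent sanity check, I would also re-derive the formula from scratch via the implicit function theorem, mirroring the argument in Appendix~\ref{app:infl}. Differentiating the first-order optimality condition $\nabla_\theta R_\epsilon(\hat{\theta}_{i,c'}) = 0$ at $\epsilon = 0$, and using both the fact that $\nabla R(\hat{\theta}) = 0$ and the Hessian definition $H_{\hat{\theta}} = \tfrac{1}{n}\nabla^2 \sum_j \ell(z^{tr}_j;\hat{\theta})$, gives the leading-order expansion
$$\hat{\theta}_{i,c'} - \hat{\theta} \approx \epsilon\, H^{-1}_{\hat{\theta}}\bigl(\nabla\ell(z^{tr}_i;\hat{\theta}) - \nabla\ell(\hat{z}^{tr}_i(c');\hat{\theta})\bigr).$$
A first-order Taylor expansion of $\ell_{\text{fair}}$ around $\hat{\theta}$ then produces $\ell_{\text{fair}}(\hat{\theta}) - \ell_{\text{fair}}(\hat{\theta}_{i,c'}) \approx -\nabla_\theta \ell_{\text{fair}}(\hat{\theta})^{\intercal}(\hat{\theta}_{i,c'} - \hat{\theta})$, which after substitution matches equation~\eqref{eq:infl_single}. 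The extension to the averaged $E$-sample variant follows immediately by replacing $\nabla\ell(\hat{z}^{tr}_i(c');\hat{\theta})$ with the corresponding empirical average over $k = 1,\dots,E$.

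The main obstacle is not analytic but bookkeeping: Proposition 1 was stated for reweighing weights $w_i \in [0,1]$, whereas here I need to interpret the replacement as a signed perturbation in which one sample is effectively assigned weight $-1$. I would either justify this by noting that the first-order influence formula is linear in the weights (so the extension to signed weights is immediate from the same Taylor argument), or bypass the issue entirely by using the implicit-function-theorem derivation above, which does not require any positivity of $\epsilon$ or $w_i$. Provided the standard smoothness of $\ell$ in $\theta$ and invertibility of $H_{\hat{\theta}}$ carried over from Proposition 1, no new analytic assumptions are needed.
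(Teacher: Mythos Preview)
Your proposal is correct and follows exactly the paper's own approach, which is simply the one-line remark ``Invoking Proposition \ref{eq:fairinfl}, we can easily prove.'' Your added implicit-function-theorem derivation is a welcome independent check.

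One bookkeeping caution: a literal substitution of $w_i=-1$, $w_{i'}=+1$ into Proposition~\ref{eq:fairinfl} as stated gives
\[
-\nabla_\theta\ell_{\text{fair}}(\hat\theta)^\intercal H^{-1}_{\hat\theta}\bigl(-\nabla\ell(z^{tr}_i;\hat\theta)+\nabla\ell(\hat z^{tr}_i(c');\hat\theta)\bigr),
\]
which is the \emph{negative} of \eqref{eq:infl_single}, so your sentence ``multiplying by the leading minus sign \ldots yields the bracketed difference'' does not literally check out. The discrepancy originates in the paper's Appendix~\ref{app:infl}, which equates $\ell_{\text{fair}}(\hat\theta)-\ell_{\text{fair}}(\hat\theta_{\K})$ with $+\partial_\epsilon\ell_{\text{fair}}(\hat\theta_{\K})|_{\epsilon=0}$ rather than $-\partial_\epsilon$. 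Your direct implicit-function-theorem computation does recover the sign stated in \eqref{eq:infl_single}, so lean on that derivation rather than on Proposition~\ref{eq:fairinfl} as written.
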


\para{Why Can CIF Improve Fairness?} We include the full theoretical analysis of why overriding training concepts using CIF framework can improve fairness in Appendix~\ref{sec:why}. We briefly summarize here. When overriding label $Y$, we can change a training label of a disadvantaged group from a wrong label to a correct one, and effectively improve the performance of the model for this group. Therefore the label (re)assignment can reduce the accuracy disparities. Overriding sensitive attribute $A$ improves fairness by balancing the data distribution. Later in the experiments (Figure~\ref{fig:resample} in Appendix~\ref{app:abl}), we show that the influence function often identifies the data from the majority group and recommends them to be changed to the minority group.

\section{Method}
\label{sec:algo}

We present our method of generating counterfactual samples and computing CIF.

\subsection{Generating Counterfactual Samples} 
\label{sec:cs}

To compute the fairness influence based on Eqn.~\ref{eq:infl_single}, we need to first generate the corresponding counterfactual sample $\hat{z}^{tr}_{i}(c') = (\hat{x}_i(c'), \hat{y}_i(c'), \hat{a}_i(c'), \hat{c}_i = c')$ when we override concept $C$ from $c$ to $c'$. Theoretically, generating the counterfactual examples requires the assumptions of the underlying causal graph but we use a set of practical algorithms to approximate.

\para{Overriding Label $Y$.} Since there is no variable in training data dependent on $Y$ (Figure~\ref{fig:inter_Y}), we can simply change the sample's label to the target label $\hat{y}_i$ and keep other attributes unchanged, \ie $\hat{z}^{tr}_{i}(\hat{y}_i) = (x_i, \hat{y}_i, a_i,  \hat{c}_i = \hat{y}_i)$.

\para{Overriding Sensitive Attribute $A$.} When we override a sample's $A$, both its $X$ and $Y$ need to change (Figure~\ref{fig:inter_A}). This is the same as asking, \eg in a loan application, ``How a female applicant's profile (\ie $x_i$) and the loan decision (\ie $y_i$) would change, had she been a male (\ie $a_i = \hat{a_i}$)?'' Inspired by~\citep{black2020fliptest}, we train a W-GAN~\citep{arjovsky2017wasserstein} with \textit{optimal transport mapping}~\citep{villani2009optimal} to generate \textit{in-distributional}\footnote{We need the counterfactual samples to be in-distributional rather than out-of-distributional because we need the change between the counterfactual sample and the original sample to be large enough to impact the fairness measure.
} counterfactual samples for $x_i$ as if $x_i$ belongs to a different $a_i$. To do so, we need to map the distribution of $X$ from $A = a$ to $A = a'$. We first partition the training samples' feature into two groups: $X | A = a$ and $X | A = a'$. Then we train a W-GAN with the generator $G_{a \rightarrow a'}$ as the approximated optimal transport mapping from $X | A = a$ to $X | A = a'$ and the discriminator $D_{a \rightarrow a'}$ ensures the mapped samples $G_{a \rightarrow a'}(X)$ and the real samples $X | A = a'$ are indistinguishable. The training objectives are the following:
\begin{equation}
\begin{aligned}
&\ell_{G_{a \rightarrow a'}} = \frac{1}{n}\Big(\sum_{x \in X|A=a}{D(G(x)) +\lambda \cdot \sum_{x \in X|A=a}{c(x, G(x))}} \Big)\\
&\ell_{D_{a \rightarrow a'}}=\frac{1}{n} \Big(\sum_{x'\in X|A=a'}{D(x') - \sum_{x \in X|A=a}{D(G(x))}\Big)}
\end{aligned}
\end{equation}
where $n$ is the number of training samples, $\lambda$ is the weight balancing the conventional W-GAN generator loss (\ie the first term in $\ell_{G_{a \rightarrow a'}}$) and the distance cost function $c(.)$ (\ie $\ell_2$ norm in our case) that makes sure the mapped samples are not too far from the original distribution. 

After we train the W-GAN on the training data, we can use the trained generator $G_{a \rightarrow a'}$ to map a sample $x_i$ to its counterfactual version $\hat{x}_i = G_{a_i \rightarrow \hat{a}_i}(x_i)$. In addition, once we have the counterfactual features, we can use the original model to predict the corresponding counterfactual label (\ie following the dependency link $X \rightarrow Y$ in Figure~\ref{fig:inter_A}). The resulting counterfactual sample is $\hat{z}^{tr}_{i}(\hat{a}_i) = (\hat{x}_i, h_{\hat{\theta}}(\hat{x}_i), \hat{a}_i,\hat{c}_i = \hat{a}_i)$.

\para{Overriding Feature $X$.} In image data, assume there exists an image-label attribute $C = attr(X)$, \eg young or old in facial images, and overriding $X$ means transforming the image (\ie all pixel values in $X$) as if it belongs to a different $C$. In tabular data, $C$ is one of the features in $X$, and when $C$ is changed, all other features in $X$ need to change accordingly. In both cases, similar to overriding $A$, we train a W-GAN to learn the mapping from the group $X|C = c$ to $X|C = c'$; the resulting generator is $G_{c \rightarrow c'}$ and the generated counterfactual feature is $\hat{x}_i = G_{c_i \rightarrow \hat{c}_i}(x_i)$. Similarly, since the data dependency $X \rightarrow Y$ exists in our assumption in Figure~\ref{fig:inter_X}, we also use the original model's predicted label as the counterfactual label. The resulting counterfactual sample is $\hat{z}^{tr}_{i}(\hat{c}_i) = (\hat{x_i}, h_{\hat{\theta}}(\hat{x_i}), a_i, \hat{c}_i = \hat{x}_i)$.

\para{Removal.} Removing is simply setting the counterfactual sample to be null, \ie $\hat{z}^{tr}_{i}(c') = \varnothing$.

\subsection{Computing Influence} Following~\citep{koh2017understanding}, we use the Hessian vector product (HVP) to compute the product of the second and the third term in Eqn.~\ref{eq:infl_single} together. Let $v := \left( \nabla \ell(z^{tr}_{i}; \hat{\theta}) - \nabla \ell(\hat{z}^{tr}_{i}(c'); \hat{\theta})  \right)$, we can compute $H^{-1}v$ recursively~\citep{agarwal2017second}:
\begin{equation}
    \hat{H}_r^{-1}v = v + (I - \hat{H}_0)\hat{H}_{r-1}^{-1}v
\end{equation}
where $\hat{H}_0$ is the Hessian matrix approximated on random batches. Let $t$ be the final recursive iteration, then the final CIF is $\infl(D_{val}, \hat{\theta}_{i,c'}) \approx - \nabla_{\theta} \ell_{\text{fair}}(\hat{\theta})^{\intercal}\hat{H}_t^{-1}v$, where $\ell_{\text{fair}}(\hat{\theta})$ is the surrogate loss of fairness measure (\eg Eqn.~\ref{eq:fairDP}, ~\ref{eq:fairEOP} or ~\ref{eq:fairEO}). 

Similar to~\citep{koh2017understanding}, we assume the loss is twice-differentiable and strongly convex in $\theta$, so that $H_{\hat{\theta}}$ exists and is positive definite, \ie $H^{-1}_{\hat{\theta}}$ exists.  If the assumptions are not satisfied, the convergence would suffer. It is a well-documented problem in the literature~\citep{koh2017understanding,basu2020influence}. However, our goal is not to propose a better influence function approximating algorithm; we aim to demonstrate the idea of leveraging influence function to help practitioners understand the unfairness. As better influence-approximating algorithms are invented, our framework is flexible enough to plug in and benefit from the improvement. Later in Section~\ref{subsec:miti}, we empirically show the accuracy of our influence estimation.

\section{Experiments}
\label{sec:eval}

\begin{figure*}[t]
  \begin{minipage}[t]{\linewidth}
    \centering
  \includegraphics[width=\linewidth]{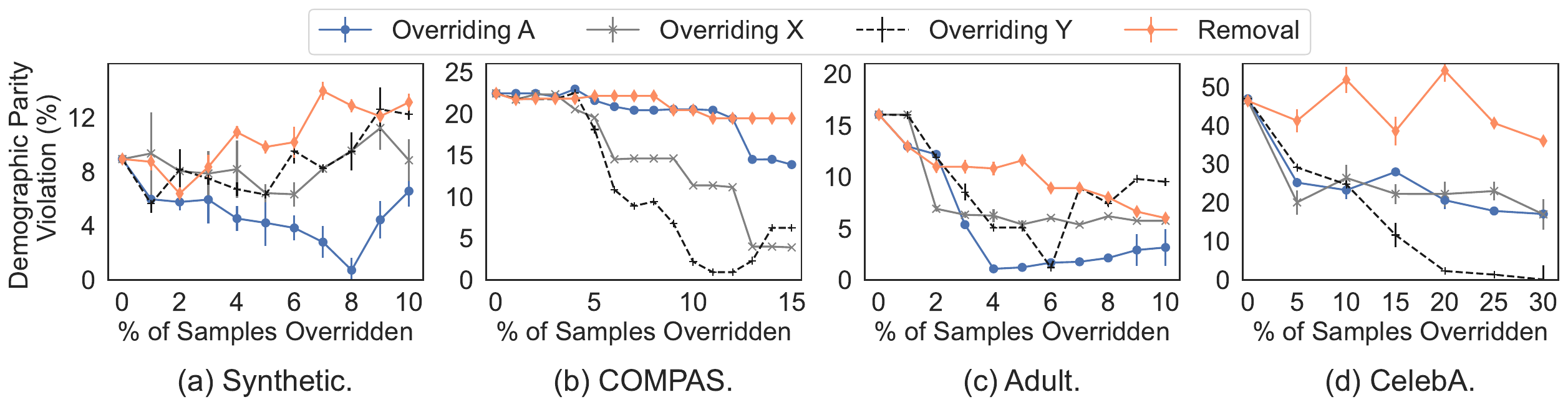}
  \caption{CIF-based mitigation performance with fairness measure Demographic Parity (DP).}
  \label{fig:miti_DP}
  \end{minipage}
  \begin{minipage}[t]{\linewidth}
    \centering
    \includegraphics[width=\linewidth]{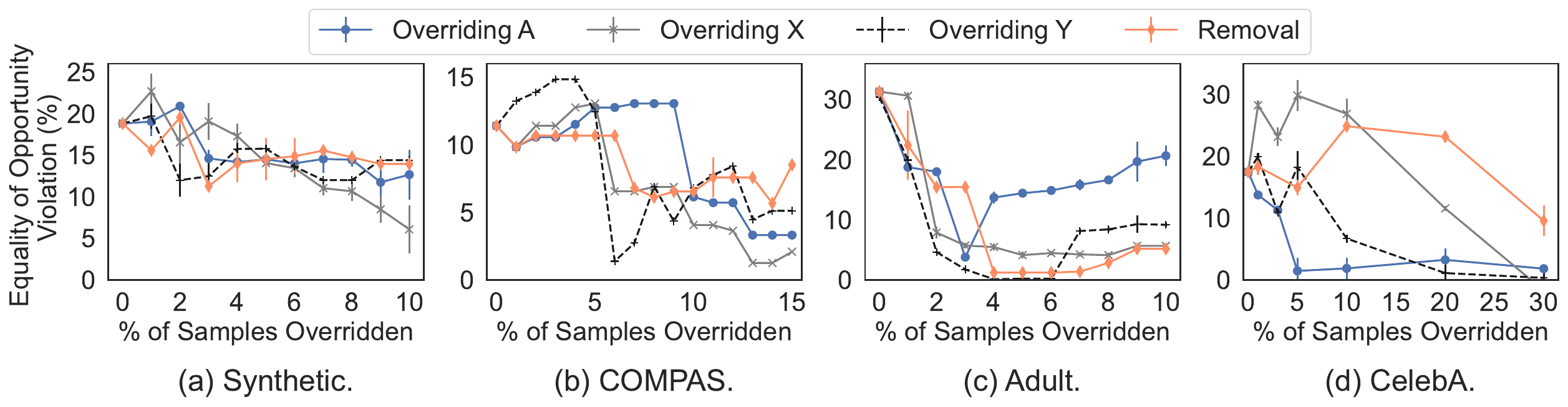}
    \caption{CIF-based mitigation performance with fairness measure Equality of Opportunity (EOP).}
    \label{fig:miti_eop}
  \end{minipage}
    \begin{minipage}[t]{\linewidth}
    \centering
    \includegraphics[width=\linewidth]{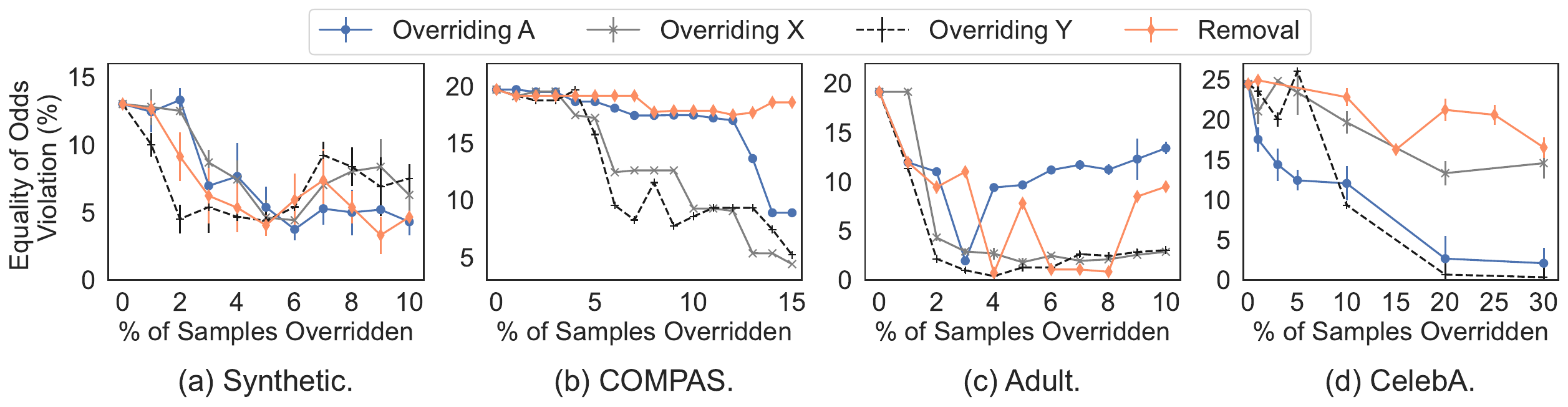}
    \caption{CIF-based mitigation performance with fairness measure Equality of Odds (EO).}
    \label{fig:miti_eo}
  \end{minipage}
\end{figure*}

We present a series of experiments to validate the effectiveness of CIF in explaining and mitigating model unfairness, detecting biased/poisoned samples, and recommending resampling to balance representation.

We test CIF on 4 datasets: synthetic, COMPAS~\citep{angwin2016machine}, Adult~\citep{kohavi1996scaling}, and CelebA~\citep{liu2015faceattributes}. We report results on three group fairness metrics (DP, EOP, and EO, see Table~\ref{tab:fairness_approx} in Appendix~\ref{app:approx} for the definition). We include dataset details in Appendix~\ref{app:dataset} and experiment details in Appendix~\ref{app:exp_details}.

\subsection{Mitigation Performance}
\label{subsec:miti}

We test the CIF-based mitigation by first computing CIF values on all training samples, and then replacing samples with the highest CIF values by their corresponding generated counterfactual samples, and retraining the model. In each retraining on the removed training set, we repeat the training process 10 times and report the standard deviation of the fairness measure in the error bars. Figure~\ref{fig:miti_DP}-\ref{fig:miti_eo} show the fairness performance after the model training. We observe that all three fairness measures improve significantly after following CIF's mitigation recommendations. See Figure~\ref{fig:miti_acc_DP}-\ref{fig:miti_acc_eo} in Appendix~\ref{app:mitigation} for the reported model accuracy.

We summarize observations: (1) Overriding $Y$ is highly effective on real-world data but not on synthetic. We conjecture that this is because we control the synthetic data to be cleanly labeled, which is not the case for other real-world data. To test this hypothesis, we add label noise in the synthetic data to see if it would make Y-overriding more effective. Table~\ref{tab:syn_label} in Appendix~\ref{app:label_noise} shows the results. When labels are no longer clean, our Y-overriding becomes more effective, showing that indeed label noise can be a significant contributor to the unfairness, as also indicated by prior work~\citep{wang2021fair}.

(2) Overriding $A$ proves to be helpful for most cases, especially for DP, which highly relates to the demographic variable $A$.

(3) We set the size of synthetic data to be small (1,000) to show that simply removing training samples might not always be a good strategy, particularly on a small dataset in which the model would suffer significantly from losing training samples.

\para{Fairness-utility Tradeoff.} We report the fairness-utility tradeoffs of our mitigation on COMPAS, together with the in-processing mitigation~\citep{agarwal2018reductions} in Figure~\ref{fig:tradeoff}. Our mitigation is comparable to \citep{agarwal2018reductions}; sometimes we can achieve better fairness given a similar level of accuracy (\eg when accuracy is $\sim 60\%$). 

\para{Accuracy of CIF Estimate.} Figure~\ref{fig:infl_est_compass} plots influence value vs. the actual difference in fairness loss (DP) on the COMPAS dataset. See Appendix~\ref{app:infl_val} for experiment details. The relationship between our estimated influence and the actual change in fairness is largely linear, meaning our influence value can estimate the fairness change reasonably well.

\para{Distribution of CIF.} We show the distribution of influence values computed on COMPAS corresponding to three fairness metrics in Figure~\ref{fig:infl_dist} (Appendix~\ref{app:infl_dist}). Overriding $Y$ has the highest influence value. This is because we change the value of $Y$ directly in this operation, which is more ``unnaturally'' compared to generating more ``natural'' counterfactual examples with W-GAN (overriding $X$ and $A$) or model-predicted value of $Y$ (overriding $X$). So practitioners should be particularly cautious about mislabelling, \eg if any unprivileged group should be labeled favorable but ended up getting labeled unfavorable.

\subsection{Additional Applications of CIF}
We provide three examples of additional applications that can be derived from our CIF framework: (1) fixing mislabelling, (2) defending against poisoning attacks, and (3) resampling imbalanced representations. We include experiment details and results in Appendix~\ref{app:abl}.

\para{Fixing Mislabeling.} We flip training labels $Y$ in the Adult dataset to artificially increase the model's unfairness. Following~\citep{wang2021fair}, we add group-dependent label noise, \ie the probability of flipping a sample's $Y$ is based on its $A$, to enlarge the fairness gap. We then compute $Y$-overriding CIF on each sample, and flag samples with the top CIF value. In Figure~\ref{fig:abl_flip_y}, we report the precision of our CIF-based detection and mitigation performance if we flip the detected samples' labels and retrain the model. Our detection can flag the incorrect labels that are known to be the source of the unfairness with high precision (compared to randomly flagging the same percentage) and improves the model fairness if the detected labels are corrected.

\begin{figure*}[t]
\begin{minipage}[b]{0.39\linewidth}
    \centering
  \includegraphics[width=\linewidth]{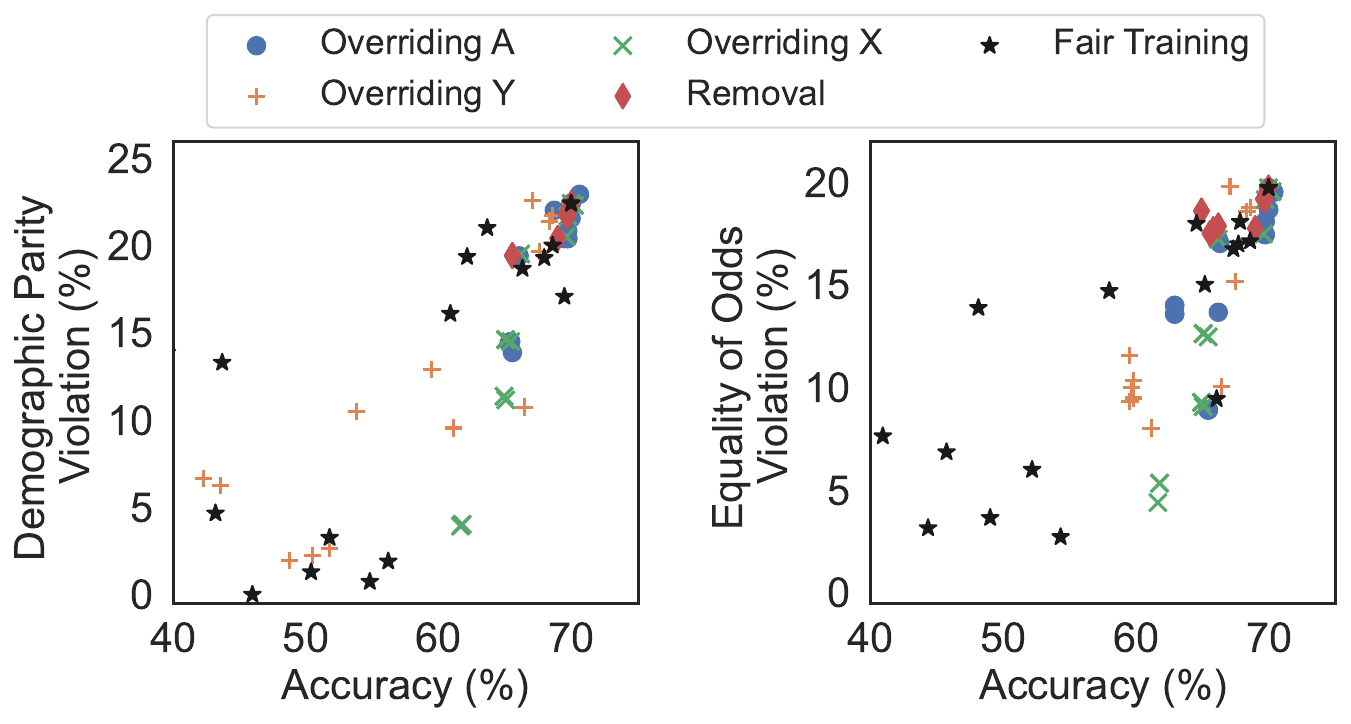}
  \caption{Fairness-accuracy tradeoff of CIF-based mitigation on COMPAS. CIF-based mitigation is comparable to in-processing mitigation method, and sometimes achieves better fairness given a similar level of accuracy.}
  \label{fig:tradeoff}
  \end{minipage}
    \hfill
  \begin{minipage}[b]{0.58\linewidth}
    \centering
  \includegraphics[width=\linewidth]{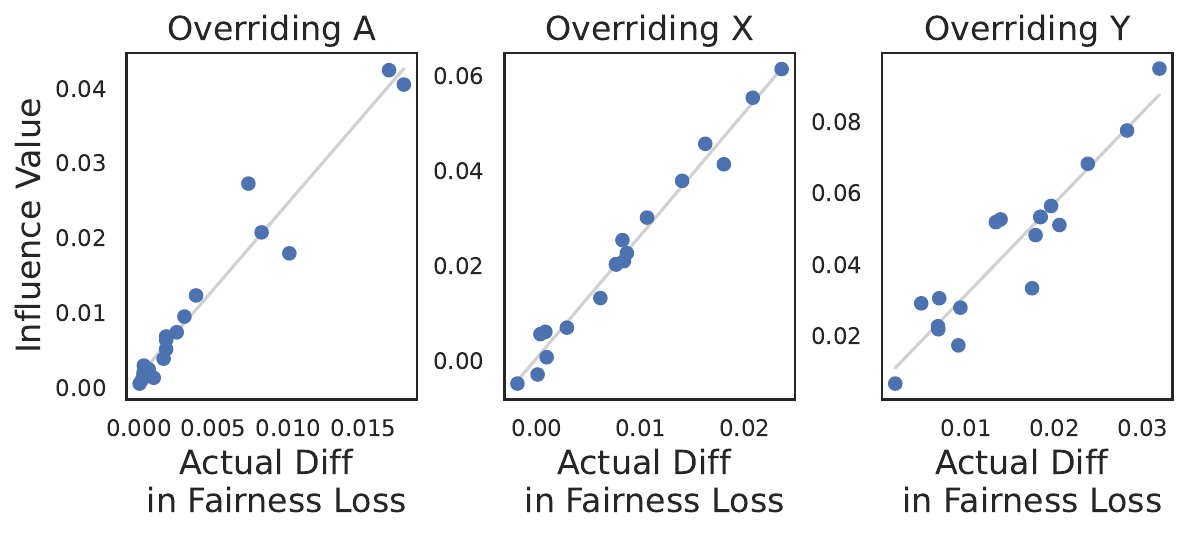}
  \caption{Estimated influence value vs. the actual difference in fairness loss on COMPAS with fairness metric Demographic Parity (DP).}
  \label{fig:infl_est_compass}
  \end{minipage}
\end{figure*}

\para{Defending against Poisoning Attacks.} We demonstrate the application of defending models against fairness poisoning attacks. To generate poisoned training samples that cause the model's unfairness, we choose poisoned training samples with the same probability based on the group- and label-dependent probability in the previous application. In addition to flipping the samples' labels, we also set the target feature (\ie race in Adult) to be a fixed value (\ie white) regardless of the original feature value. The attack that modifies a sample's feature to be a fixed value and changes its label is known as backdoor attack~\citep{gu2019badnets,li2021anti,wu2022backdoorbench}, a special type of poisoning attack. After the poisoning, all fairness measures become worse. For detection, we compute $X$-overriding CIF on the poisoned feature, and flag samples with high CIF value. For mitigation, if we flag a sample to be poisoned, we remove it from the training set and retrain the model. Figure~\ref{fig:poison} shows the precision of our detection and the mitigation performance after removal. We observe a high precision and reasonably good fairness improvement.

\begin{figure}[t]
\centering
  \includegraphics[width=0.5\linewidth]{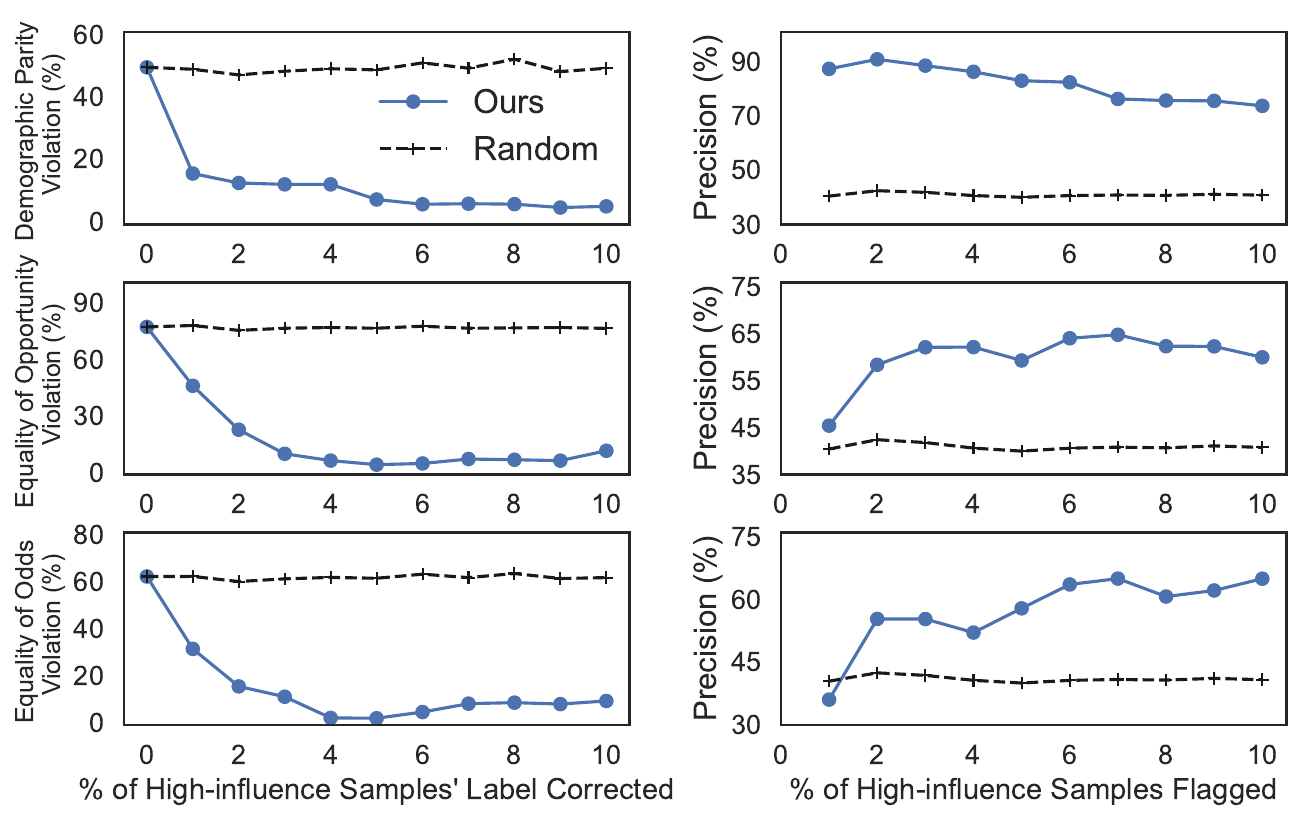}
  \caption{Precision and mitigation performance of using $Y$-overriding CIF to detect and correct training mislabeling that causes bias on Adult.}
  \label{fig:abl_flip_y}
\end{figure}

\para{Resampling Imbalanced Representations.} To create an extremely imbalanced representation in the training set, we upsample the positive samples in the privileged group (\ie male) by $200\%$ in the Adult dataset, 
further increasing the percentage of positive samples that belong to the privileged group,
and therefore the training samples are overwhelmingly represented by the privileged group. The resulting fairness becomes worse. We then compute $A$-overriding CIF, and replace the high-influence samples with their counterfactual samples (\ie adding counterfactual samples in the unprivileged group and reducing samples from the privileged group). In Figure~\ref{fig:resample}, we report the percentage of high-influence samples that belong to the privileged group (\ie how much CIF recommends the data balancing) and the mitigation performance. The high-influence samples are almost all from the privileged group, which is expected, and if they were converted to the counterfactual samples as if they are from the unprivileged group, \ie recollecting and resampling the training distribution, then fairness can improve.

\begin{figure}[t]
  \begin{minipage}[b]{0.46\linewidth}
    \centering
  \includegraphics[width=\linewidth]{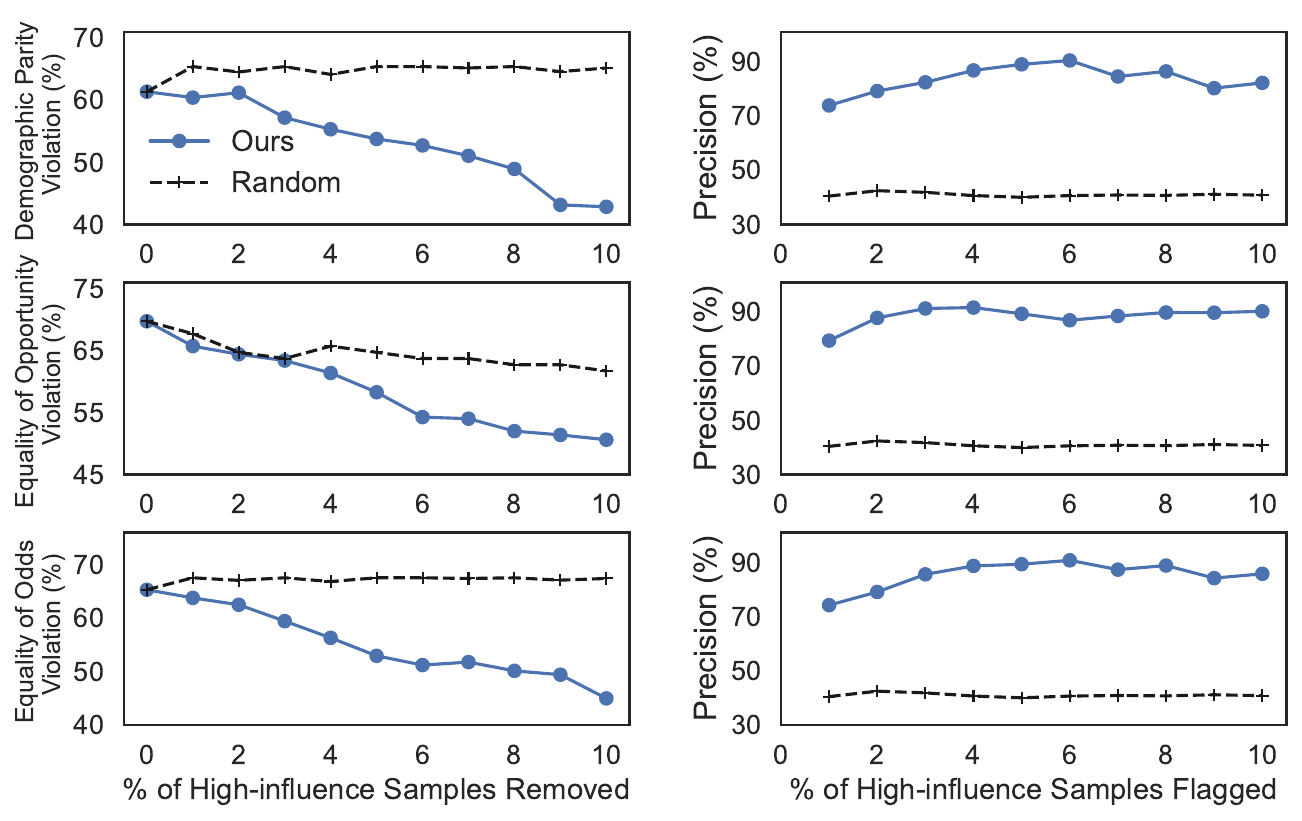}
  \caption{Precision and mitigation performance of using $X$-overriding CIF to detect and correct poisoned training samples that cause unfairness on Adult.}
  \label{fig:poison}
  \end{minipage}
  \hfill
\begin{minipage}[b]{0.49\linewidth}
    \centering
  \includegraphics[width=\linewidth]{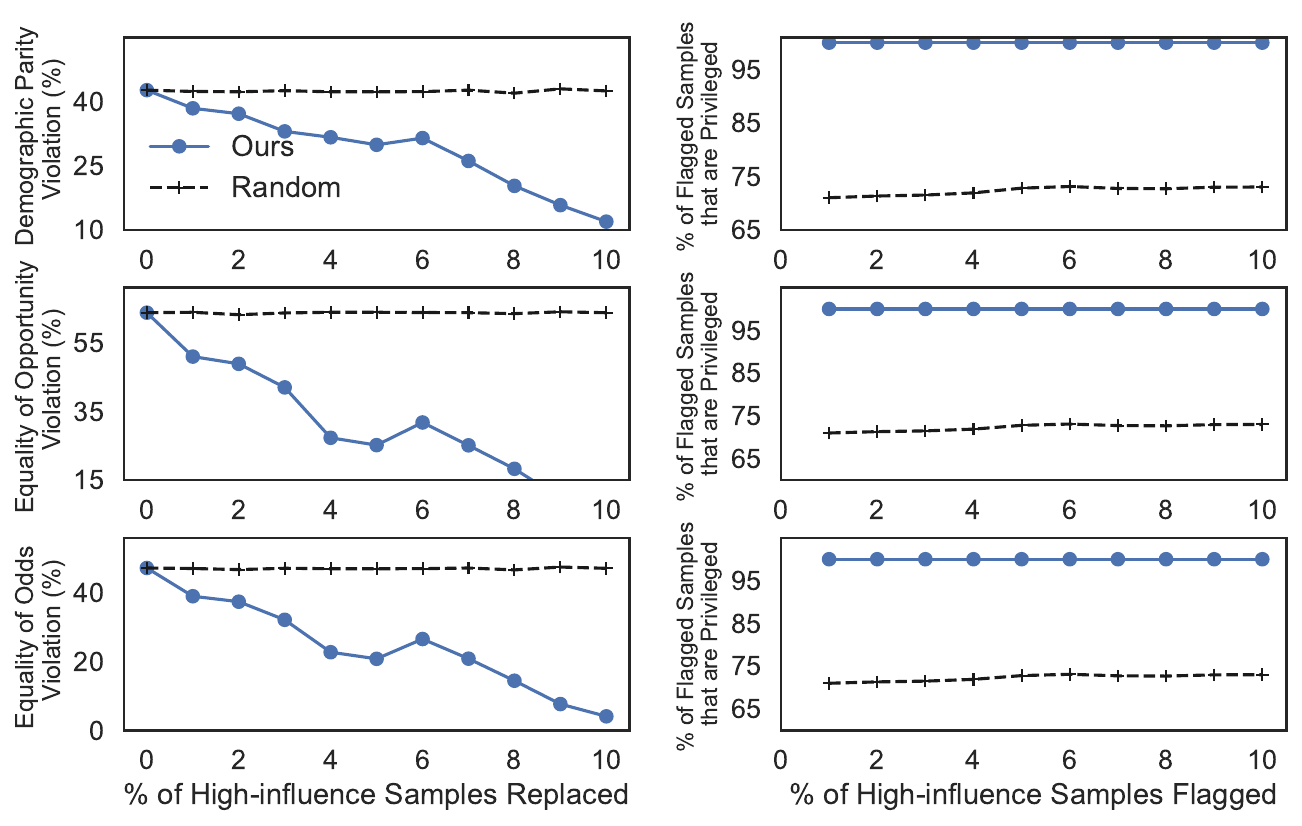}
  \caption{Performance of using $A$-overriding CIF to detect and correct imbalanced training representation that causes unfairness on Adult.}
  \label{fig:resample}
  \end{minipage}
\end{figure}

\section{Related Work}
\label{sec:related}

\para{Influence Function.} The goal of influence function is to quantify the impact of training data on the model's output. \citep{koh2017understanding} popularizes the idea of training data influence to the attention of our research community and has demonstrated its power in a variety of applications. Later works have aimed to improve the efficiency of computing influence functions. For example, Tracein \citep{pruthi2020estimating} proposes a first-order solution that leverages the training gradients of the samples, and a neural tangent kernel approach for speeding up this task. Other works have explored the computation of group influence \citep{basu2020second}, the robustness of influence function \citep{basu2020influence}, its application in explainable AI \citep{linardatos2020explainable} and other tasks like graph networks~\citep{chen2022characterizing}.

\para{Influence Function for Fairness.} Our work is closely relevant to the recent discussions on quantifying training data's influence on a model's fairness properties. \citep{wang2022understanding} computes the training data influence to fairness when removing a certain set of training samples. \citep{li2022achieving} discusses a soft version of the removal and computes also the optimal ``removal weights" for each sample to improve fairness. And \citep{sattigeri2022fair} leverages the computed influence to perform a post-hoc model update to improve its fairness. Note that those works consider the fairness effect of removing or reweighing training samples. Our work targets a more flexible and powerful definition of influence that can give practitioners a wider scope of understanding by introducing the idea of concepts and generating counterfactual samples as well as result in a wider range of potential applications.

\para{Data Repairing for Fairness.} Our work is also related to the work on data repairing to improve fairness. \citep{krasanakis2018adaptive,lahoti2020fairness} discuss the possibilities of reweighing training data to improve fairness. \citep{zhang2022fairness} proposes a ``reprogramming" framework that modified the features of training data. \citep{liu2021can} explores the possibility of resampling labels to improve the fairness of training. Other works study the robustness of model w.r.t fairness~\citep{wang2022fairness,chhabra2022robust,li2022learning}. Another line of research that repairs training data is through training data pre-proccessing~\citep{calmon2017optimized,celis2020data,kamiran2012data,du2018data}, synthetic fair data~\citep{sattigeri2019fairness,jang2021constructing,xu2018fairgan,van2021decaf}, and data augmentation~\citep{sharma2020data,chuang2021fair}.

\section{Conclusions and Limitations}
We propose \textit{Concept Influence for Fairness} (CIF), which generalizes the definition of influence function for fairness from focusing only on the effects of removing or reweighing the training samples to a broader range of dimensions related to the training data's properties. The main idea is to consider the effects of transforming the sample based on a certain \textit{concept} of training data, which is a more flexible framework to help practitioners better understand unfairness with a wider scope and leads to more potential downstream applications. 

We point out two limitations: (1) CIF needs to generate counterfactual samples w.r.t different concepts, which can be computationally expensive and (2) in CIF-based mitigation, it can be non-trivial to determine the optimal number of training samples to override that would maximally improve fairness.

\nocite{langley00}

\bibliography{references}
\bibliographystyle{abbrvnat}

\newpage
\appendix
\onecolumn

\newpage

\section{Proposition \ref{eq:fairinfl}: Derivation of Fairness Function on Group Fairness}
\label{app:infl}

Assume the risk of $\theta$ is $R(\theta):= \frac{1}{n}\sum_{i=1}^n \ell(z^{tr}_i; \theta)$, and the model trained on the entire training set is $\hat{\theta}:=\text{argmin}_{\theta} R(\theta)$. The resulting model weights if we assign weight $w_i \in [0, 1]$ to each sample $i \in \mathcal K$ and then upweight them by some small $\epsilon$ is the following:
\begin{equation}
\hat{\theta}_{\mathcal K} := \text{argmin}_{\theta} \{R(\theta) + \epsilon \sum_{i \in \K} w_i \cdot \ell(z^{tr}_{i}; \theta)\}, 
\end{equation}
By the first order condition of $\hat{\theta}_{\mathcal K}$ we have
\[
0 = \nabla R(\hat{\theta}_{\mathcal K}) + \epsilon \sum_{i \in \K}w_i \cdot  \nabla \ell(z^{tr}_{i}; \hat{\theta}_{\mathcal K})
\]
When $\epsilon \rightarrow 0$, with the Taylor expansion (and first-order approximation) we have:
\[
0 \approx \left( \nabla R(\hat{\theta}) + \epsilon \sum_{i \in \K} w_i \cdot \nabla \ell(z^{tr}_i; \hat{\theta}) \right) + \left( \nabla^2 R(\hat{\theta}) + \epsilon \sum_{i \in \K} w_i \cdot \nabla^2 \ell(z^{tr}_i; \hat{\theta}) \right) \cdot (\hat{\theta}_{\mathcal K} - \hat{\theta})
\]
By the first-order condition of $\hat{\theta}$ we have 
 $\nabla R(\hat{\theta}) = 0$, and re-arranging terms we have 
\[
\frac{\hat{\theta}_{\mathcal K} - \hat{\theta}}{\epsilon} =  - \left(H_{\hat{\theta}}+\epsilon \cdot \sum_{i \in \K} w_i \cdot \nabla^2 \ell(z^{tr}_{i}; \hat{\theta}) \right)^{-1} \cdot \left(\sum_{i \in \K} w_i \cdot \nabla \ell(z^{tr}_i; \hat{\theta}) \right)
\]
Taking the limit of $\epsilon \rightarrow 0$ on both sides we have
\[
\frac{\partial \hat{\theta}_{\K}}{\partial \epsilon}\Big|_{\epsilon = 0} = -H^{-1}_{\hat{\theta}} \cdot \left(\sum_{i \in \K} w_i \cdot \nabla \ell(z^{tr}_i; \hat{\theta}) \right)
\]

Finally, the fairness influence of assigning training sample $i$ in group $\K$ with weight $w_i$ is:
\begin{align}
    \infl(D_{val}, \K, \hat{\theta})&:= \ell_{\text{fair}}(\hat{\theta})-\ell_{\text{fair}}(\hat{\theta}_{\K})\\
&\approx \frac{\partial \ell_{\text{fair}}(\hat{\theta}_{\K})}{\partial \epsilon} \Big|_{\epsilon = 0}\\
&=\nabla_{\theta} \ell_{\text{fair}}(\hat{\theta})^{\intercal}  \frac{\hat{\theta}_{\K}}{\partial \epsilon} \Big|_{\epsilon = 0}\\
&= -\nabla_{\theta} \ell_{\text{fair}}(\hat{\theta})^{\intercal}   H^{-1}_{\hat{\theta}}  \left(\sum_{i \in \K} 
 w_i  \nabla \ell(z^{tr}_i; \hat{\theta}) \right)
 \label{eq:group}
\end{align}

\section{Proposition \ref{eq:fairinfl-cs}: Derivation of Fairness Function for Counterfactual Samples}
\label{app:proof_fairinfl}

The proof follows largely from the one we presented above for Proposition \ref{eq:fairinfl} with the only difference being adapting the summation term to incorporate the addition of terms for counterfactual samples:
\[
\epsilon \sum_{i \in \K} w_i \cdot \ell(z^{tr}_{i}; \theta)\} \rightarrow \epsilon \sum_{i \in \K} w_i \cdot \ell(z^{tr}_{i}; \theta)\} + \epsilon' \sum_{i \in \K} w'_i \cdot \ell(\hat{z}^{tr}_{i}; \theta)\}
\]
and the counterfactual model now is defined as:
\begin{equation}
\hat{\theta}_{\mathcal K} := \text{argmin}_{\theta} \{R(\theta) + \epsilon \sum_{i \in \K} w_i \cdot \ell(z^{tr}_{i}; \theta)+\epsilon \sum_{i \in \K} w'_i \cdot \ell(\hat{z}^{tr}_{i}; \theta)\}, 
\end{equation}
Similarly invoking the first-order condition we have
\[
0 = \nabla R(\hat{\theta}_{\mathcal K}) + \epsilon \sum_{i \in \K}w_i \cdot  \nabla \ell(z^{tr}_{i}; \hat{\theta}_{\mathcal K}) + \epsilon \sum_{i \in \K}w'_i \cdot  \nabla \ell(\hat{z}^{tr}_{i}; \hat{\theta}_{\mathcal K})
\]
which further offers us
\[
\frac{\partial \hat{\theta}_{\K}}{\partial \epsilon}\Big|_{\epsilon = 0} = -H^{-1}_{\hat{\theta}} \cdot \left(\sum_{i \in \K} w_i \cdot \nabla \ell(z^{tr}_i; \hat{\theta}) +\sum_{i \in \K} w'_i \cdot \nabla \ell(\hat{z}^{tr}_i; \hat{\theta})\right)
\]
and that 
\begin{align}
    \infl(D_{val}, \K, \hat{\theta})&:= \ell_{\text{fair}}(\hat{\theta})-\ell_{\text{fair}}(\hat{\theta}_{\K}) \approx  -\nabla_{\theta} \ell_{\text{fair}}(\hat{\theta})^{\intercal}   H^{-1}_{\hat{\theta}}  \left(\sum_{i \in \K} 
 w_i  \nabla \ell(z^{tr}_i; \hat{\theta})+\sum_{i \in \K} 
 w'_i  \nabla \ell(\hat{z}^{tr}_i; \hat{\theta}) \right)
 \label{eq:group}
\end{align}

By setting the the proper $w_i, w'_i$ (e.g., $w_i = \frac{1}{n}, w'_i = -\frac{1}{n}$) we recovered the claim made in Proposition \ref{eq:fairinfl-cs}.

\section{Approximating Fairness Metrics}
\label{app:approx}

Similarly to DP, we can approximate the violation of Equality of Opportunity (EOP) with:
\begin{align}
    \ell_{EOP}(\hat{\theta}) &:= \big| \mathbb{P}(h_\theta(X) = 1 | A = 0, Y=1) - \mathbb{P}(h_\theta(X) = 1 | A = 1, Y=1) \big| \\
    &\approx \Bigg| \frac{\sum_{i \in D_{val}: a_i = 0,y_i=1} g( z^{val}_i; \theta)}{\sum_{i \in D_{val}}{\mathbb{I}[a_i = 0, y_i=1]}} - \frac{\sum_{i \in D_{val}: a_i = 1,y_i=1} g( z^{val}_i; \theta)}{\sum_{i \in D_{val}}{\mathbb{I}[a_i = 1,y_i=1]}} \Bigg|
    \label{eq:fairEOP}
\end{align}

And for Equality of Odds (EO), we have
\begin{align}
    \ell_{EO}(\hat{\theta}) &:= \frac{1}{2} \Big( \big| \mathbb{P}(h_\theta(X) = 1 | A = 0, Y=1)-\mathbb{P}(h_\theta(X) = 1 | A = 1, Y=1) \big| +  \\
    & \hspace{1.04cm} \big| \mathbb{P}(h_\theta(X) = 1 | A = 0, Y=0)-\mathbb{P}(h_\theta(X) = 1 | A = 1, Y=0) \big| \Big) \\
    &\approx \frac{1}{2} \Big( \Bigg| \frac{\sum_{i \in D_{val}: a_i = 0,y_i=1} g( z^{val}_i; \theta)}{\sum_{i \in D_{val}}{\mathbb{I}[a_i = 0, y_i=1]}} - \frac{\sum_{i \in D_{val}: a_i = 1,y_i=1} g( z^{val}_i; \theta)}{\sum_{i \in D_{val}}{\mathbb{I}[a_i = 1,y_i=1]}} \Bigg| + \\
    & \hspace{0.93cm} \Bigg| \frac{\sum_{i \in D_{val}: a_i = 0,y_i=0} g( z^{val}_i; \theta)}{\sum_{i \in D_{val}}{\mathbb{I}[a_i = 0, y_i=0]}} - \frac{\sum_{i \in D_{val}: a_i = 1,y_i=0} g( z^{val}_i; \theta)}{\sum_{i \in D_{val}}{\mathbb{I}[a_i = 1,y_i=0]}} \Bigg| \Big)
     \label{eq:fairEO}
\end{align}

We summarize the definition and surrogate approximation of three group fairness measures as follows:

\begin{table}[h]
    \centering
    \large
    \resizebox{\linewidth}{!}{
    \begin{tabular}{lcc}
    \hline
         Fairness Measure &  Definition & Surrogate Approximation \\
         \hline\hline
         Demographic Parity (DP) & $\big| \mathbb{P}(h_\theta(X) = 1 | A = 0)-\mathbb{P}(h_\theta(X) = 1 | A = 1) \big|$ & $\Bigg| \frac{\sum_{i \in D_{val}: a_i = 0} g( z^{val}_i; \theta)}{\sum_{i \in D_{val}}{\mathbb{I}[a_i = 0]}} - \frac{\sum_{i \in D_{val}: a_i = 1} g( z^{val}_i; \theta)}{\sum_{i \in D_{val}}{\mathbb{I}[a_i = 1]}} \Bigg|$ \\ [3ex]
         Equality of Opportunity (EOP) & $\big| \mathbb{P}(h_\theta(X) = 1 | A = 0, Y=1)-\mathbb{P}(h_\theta(X) = 1 | A = 1, Y=1) \big|$ & $\Bigg| \frac{\sum_{i \in D_{val}: a_i = 0,y_i=1} g( z^{val}_i; \theta)}{\sum_{i \in D_{val}}{\mathbb{I}[a_i = 0, y_i=1]}} - \frac{\sum_{i \in D_{val}: a_i = 1,y_i=1} g( z^{val}_i; \theta)}{\sum_{i \in D_{val}}{\mathbb{I}[a_i = 1,y_i=1]}} \Bigg|$ \\ [3ex]
         Equality of Odds (EO) & 
         $\text{\huge$\frac{1}{2}$} \left( \begin{array}{cc}
               \big| \mathbb{P}(h_\theta(X) = 1 | A = 0, Y=1)-\mathbb{P}(h_\theta(X) = 1 | A = 1, Y=1) \big| + \\ \hspace{-0.3cm}
             \big| \mathbb{P}(h_\theta(X) = 1 | A = 0, Y=0)-\mathbb{P}(h_\theta(X) = 1 | A = 1, Y=0) \big|
             \end{array} \right)$  & 
            $\text{\huge$\frac{1}{2}$}  \left( \begin{array}{cc}
               \big| \frac{\sum_{i \in D_{val}: a_i = 0,y_i=1} g( z^{val}_i; \theta)}{\sum_{i \in D_{val}}{\mathbb{I}[a_i = 0, y_i=1]}} - \frac{\sum_{i \in D_{val}: a_i = 1,y_i=1} g( z^{val}_i; \theta)}{\sum_{i \in D_{val}}{\mathbb{I}[a_i = 1,y_i=1]}} \big| + \\ \hspace{-0.3cm}
               \big| \frac{\sum_{i \in D_{val}: a_i = 0,y_i=0} g( z^{val}_i; \theta)}{\sum_{i \in D_{val}}{\mathbb{I}[a_i = 0, y_i=0]}} - \frac{\sum_{i \in D_{val}: a_i = 1,y_i=0} g( z^{val}_i; \theta)}{\sum_{i \in D_{val}}{\mathbb{I}[a_i = 1,y_i=0]}} \big|
            \end{array} \right) $ \\ 
            \hline
    \end{tabular}}
    \caption{Fairness definition and surrogate approximation.}
    \label{tab:fairness_approx}
\end{table}

\section{Theoretical Analysis: Why Can CIF Improve Fairness?}
\label{sec:why}

\paragraph{Overview.} We base the analysis on the data generation model adopted in \citep{feldman2020does,liu2021understanding} to capture the impact of data patterns generated with different frequencies and the impact of label errors. This setup is a good fit for understanding how counterfactual data overriding can change the data frequency of different groups (majority group with higher frequency vs. minority group with lower frequency) and provides insights for CIF.

Overriding label $Y$ is relatively straightforward. If we are able to change a training label of a disadvantaged group from a wrong label to a correct one, we can effectively improve the performance of the model for this group. Therefore the label (re)assignment can reduce the accuracy disparities. Our analysis also hints that 
the influence function is more likely to identify samples from the disadvantaged group with a lower presence in the data and mislabeled samples. This is because, for a minority group, a single label change would incur a relatively larger change in the influence value.

Overriding sensitive attribute $A$ improves fairness by balancing the data distribution.  In the experiments (Figure~\ref{fig:resample}), we show that the influence function often identifies the data from the majority group and recommends them to be changed to the minority group, as shown in Figure \ref{fig:flip}. In the analysis, we also show that this transformation incurs positive changes in the accuracy disparities between the two groups and therefore improves fairness. 
\begin{wrapfigure}{r}{0.3\textwidth}
\vspace{-0.05in}
  \begin{center}
   \includegraphics[width=0.3\textwidth]{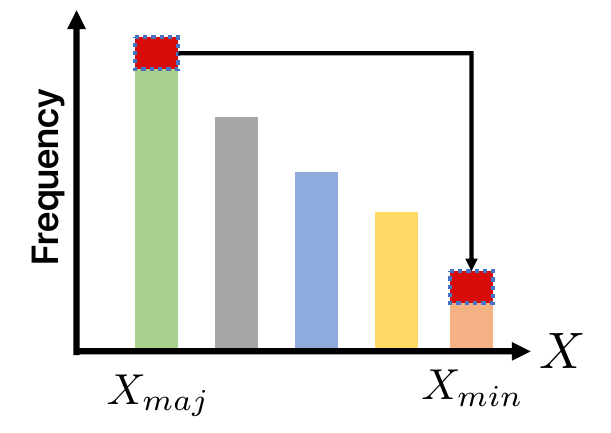}
  \end{center}
  \caption{Illustration of the effect of overriding sensitive attribute $A$ as rebalancing data distribution.}
  \label{fig:flip}
  \vspace{-0.1in}
\end{wrapfigure}

\paragraph{Setup.} We base our analysis on the data generation model adopted in \citep{feldman2020does,liu2021understanding} to capture the impact of data patterns generated with different frequencies and the impact of label errors. This setup is a good fit for understanding how counterfactual data overriding change the frequencies of data of different groups and therefore provides insights for CIF.

In this setup, each feature $X$ takes value from a \textit{discretized} set $\mathcal X$. For each $X \in \mathcal X$, sample a quantity $q_X$ independently and uniformly from a set $\lambda := \{\lambda_1,...,\lambda_N\}$.
The probability of observing an $X$ is given by $D(X)=q_X/(\sum_{X \in \mathcal X} q_X)$. Each $X$ is mapped to a true label $Y = f(X)$. But our observed training labels can be noisy, denoting as $\tilde{Y} \sim \p(\tilde{Y}|X,Y)$. $n$ pairs of $(X,\tilde{Y})$ are observed and collected for the dataset.
Denote by $S_l$ the set of all samples that appear $l$ times in the dataset, and denote by $l[X]$ the number of appearances for $X$. Each $X$ is also associated with a sensitive group attribute $A$. 
Denote by $h_{\theta}$ as the classification model defined by $\theta \in \Theta$ (parametric space) and the generalization error over a given distribution $\D$ as
$$
\err_{\D}(h_{\theta}) := \E_{\D}[\mathbf{1}(h_{\theta}(X) \neq Y)]~.
$$ The following expected generalization error is defined in \citep{feldman2020does}:
$$
\err(\theta|D):=\E_{\D \sim \p[\cdot |D]}\left[ \err_{\D}(h_{\theta})\right]~,
$$ 
where $\p[\cdot |D]$ is the distribution for the data distribution inferred from the dataset $D$. It is proved that: 
\begin{theorem}[\citep{feldman2020does}]
$
    \err(\theta|D) \geq \min_{\theta' \in \Theta}  \err(\theta'|D) + \sum_{l \in [n]} \tau_l \cdot \sum_{X \in S_l }\p[h_{\theta}(X) \neq Y].
$\label{thm:feldman}
\end{theorem}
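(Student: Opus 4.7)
The plan is to decompose the expected generalization error into a per-sample contribution weighted by a posterior that depends only on how many times each feature value was observed, and then compare any candidate classifier against the Bayes-optimal one on this decomposition. First I would expand the definition of $\err(\theta|D)$, writing
\begin{equation*}
\err(\theta|D) = \E_{\D\sim\p[\cdot|D]}\Bigl[\sum_{X\in\mathcal X} D(X)\,\mathbf{1}\bigl(h_\theta(X)\neq Y\bigr)\Bigr],
\end{equation*}
then swap expectation and sum so that the only stochastic object on the right is the posterior expected frequency $\E_{\p[\cdot|D]}[D(X)]$. Since the prior on $q_X$ is i.i.d.\ uniform over $\lambda$ and the observation likelihood factorizes across distinct $X$ given the appearance counts, by exchangeability the posterior over $q_X$ depends on the dataset only through $l[X]$; thus the posterior expected frequency is a function of $l[X]$ alone, call it $\tau_{l[X]}$.

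Next I would regroup samples by appearance count to obtain
\begin{equation*}
\err(\theta|D) \;=\; \sum_{l\in[n]} \tau_l \sum_{X\in S_l} \p\bigl[h_\theta(X)\neq Y\bigr],
\end{equation*}
where the probability now is over the posterior on the true label $Y=f(X)$ induced by the observed noisy labels $\tilde Y$ at $X$ (the dependence on $\tilde Y$ is absorbed into $\p$ since the noise channel is fixed). The minimizing classifier $\theta'\in\argmin_{\theta'} \err(\theta'|D)$ is just the Bayes-optimal rule: for each seen $X$, output the label that maximizes the posterior over $Y$; this yields $\min_{\theta'}\err(\theta'|D) = \sum_l \tau_l \sum_{X\in S_l} \min_y \p[y\neq Y]$.

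Finally I would compare: for every $X\in S_l$, the suboptimality of $h_\theta(X)$ versus the Bayes rule is exactly $\p[h_\theta(X)\neq Y] - \min_y \p[y\neq Y] \geq 0$ if we interpret $\p[h_\theta(X)\neq Y]$ as the posterior disagreement at $X$, and rearranging recovers the stated bound (absorbing the minimum-error term on the right gives the lower bound expressed in terms of $\sum_{X\in S_l}\p[h_\theta(X)\neq Y]$). The main obstacle I anticipate is the bookkeeping around $\tau_l$: one must verify both that the posterior of $D(X)$ really does reduce to a function of $l[X]$ after integrating over the hidden $q_{X'}$ for $X'\neq X$, and that the normalization constant $\sum_{X'} q_{X'}$ can be handled cleanly (typically by a concentration or change-of-measure argument on the total mass, or by working with the unnormalized version and noting that the normalizer is a shared scalar). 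Once the $\tau_l$ are identified, the rest reduces to a per-$X$ Bayes-risk comparison, which is routine.
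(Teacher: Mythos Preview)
The paper does not prove this statement; Theorem~\ref{thm:feldman} is quoted from \cite{feldman2020does} and then specialized to subgroups in Proposition~\ref{prop:group}. There is thus no in-paper proof to compare against --- what you have written is a reconstruction of (a version of) Feldman's argument.

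On that reconstruction: the decomposition of $\err(\theta|D)$ into per-pattern contributions and the identification of $\tau_l$ as the posterior expected mass of an $l$-appearance pattern are the right ideas and match Feldman's setup. But the final ``rearranging'' step is a genuine gap. Your decomposition yields the \emph{equality}
\[
\err(\theta|D)\;=\;\sum_{l}\tau_l\sum_{X\in S_l}\p[h_\theta(X)\neq Y]\;+\;\text{(unseen-$X$ contribution)},
\]
and subtracting the analogous identity for the minimizer gives
\[
\err(\theta|D)-\min_{\theta'}\err(\theta'|D)\;=\;\sum_{l}\tau_l\sum_{X\in S_l}\Bigl(\p[h_\theta(X)\neq Y]-\min_{y}\p[y\neq Y]\Bigr),
\]
i.e.\ the per-pattern \emph{suboptimality}, not the raw error term that appears in the stated bound. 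You cannot ``absorb'' the $\min_y\p[y\neq Y]$ piece into the left-hand side unless it vanishes for every seen $X$; that holds only in the noiseless regime where each observed label pins down $Y=f(X)$ exactly. The present paper explicitly works with noisy $\tilde Y\sim\p(\tilde Y|X,Y)$, so that shortcut is unavailable. In Feldman's original formulation the additive term on the right is precisely a memorization-gap (suboptimality) quantity; the restatement here is informal. Your argument is salvageable if you either keep the difference term on the right or add an explicit clean-label assumption, but as written the last sentence does not follow.
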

In the above $\tau_l$ is a constant that depends on $l$. We call this the \textit{importance} of an $l$-appearance sample. It is proven in \citep{feldman2020does} that when $l$ is small, for instance $l=1$, $\tau_l$ is at the order of $O(\frac{1}{n})$, and when $l$ is large $\tau_l$ is at the order of $O(\frac{l^2}{n^2})$ \citep{liu2021understanding}.  
 
 Consider an ideal setting where we train a parametric model $\theta$ that fully memorizes the training data that $R(\theta) = 0$, and therefore $\p[h_{\theta}(X) \neq Y] =\tilde{\p}[\tilde Y \neq Y|X]$, where $\tilde{\p}[\tilde Y \neq Y|X]$ is the empirical label distribution for sample pattern $X$. Theorem \ref{thm:feldman} can easily generalize to each group $D_a$: 
 \begin{proposition}
 $
    \err(\theta|D_a) \geq \min_{\theta' \in \Theta}   \err(\theta'|D_a) + \sum_{l \in [n]} \frac{\tau_l}{\sum_{X \in D_a} \tau_{l[X]} } \cdot \sum_{X \in D_a \cap S_l }\tilde{\p}[\tilde Y \neq Y|X].
$     \label{prop:group}
 \end{proposition}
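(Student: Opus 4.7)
The plan is to reduce the statement to Theorem~\ref{thm:feldman} applied to the restricted sub-dataset $D_a := \{(X,\tilde Y)\in D : A(X)=a\}$, viewed as a dataset in its own right. The key observation is that $\err(\theta|D_a)$ and $\min_{\theta'}\err(\theta'|D_a)$ have the same structural form as the unconditional quantities in Theorem~\ref{thm:feldman}: both are expected errors over a distribution $\p[\cdot|D_a]$ inferred from the restricted sample set. Hence the Feldman bound can in principle be invoked verbatim on this smaller problem, provided we carefully track how the pattern-level importance weights $\tau_l$ transform under the group conditioning.

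First, I would unpack what the proof of Theorem~\ref{thm:feldman} implicitly assigns to each pattern. Effectively, a pattern $X$ with appearance count $l[X]$ carries a prior weight proportional to $\tau_{l[X]}$, and together these weights form a (sub)distribution whose total mass over $D$ controls the normalization of the generalization-error bound. When we condition on $A=a$, the weight of $X \in D_a$ becomes $\tau_{l[X]}$ renormalized by the total group mass $\sum_{X\in D_a}\tau_{l[X]}$, which is exactly the denominator in Proposition~\ref{prop:group}. Grouping patterns by appearance count $l$ then rewrites the per-group bound as a sum over $l$ of $\tfrac{\tau_l}{\sum_{X\in D_a}\tau_{l[X]}}\sum_{X\in D_a\cap S_l}(\cdot)$, matching the stated form.

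Next, I would invoke the memorization assumption $R(\theta)=0$ so that $\p[h_\theta(X)\neq Y]=\tilde\p[\tilde Y\neq Y|X]$ on each pattern in the training data, including those in $D_a$. Substituting this into the renormalized Feldman inequality and restricting the inner sum to $D_a\cap S_l$ delivers the claimed bound. The $\min_{\theta'\in\Theta}\err(\theta'|D_a)$ term arises in the same way as in Theorem~\ref{thm:feldman}, now with the optimization restricted to the group-conditional risk.

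The main obstacle is aligning the appearance counts between $D$ and $D_a$: a priori, a pattern $X$ that appears $l$ times in $D$ need not appear $l$ times in $D_a$, and the empirical noise statistics $\tilde\p[\tilde Y\neq Y|X]$ within the group could differ from those over the full dataset. Under the standard assumption of the generation model of \cite{feldman2020does,liu2021understanding} that the sensitive attribute $A$ is a deterministic function of the feature $X$, however, no pattern straddles two groups: $l[X]$ coincides whether counted in $D$ or in $D_a$, and $D_a\cap S_l$ is unambiguous. With this alignment the renormalized Theorem~\ref{thm:feldman} applied to $D_a$ yields Proposition~\ref{prop:group} directly; without it, one needs an additional step to restrict $l[X]$ to within-group counts before invoking the Feldman bound.
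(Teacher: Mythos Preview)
Your renormalization intuition---dividing each $\tau_l$ by the group mass $\sum_{X\in D_a}\tau_{l[X]}$---is exactly right, but the mechanism you propose for obtaining it has a gap, and the paper takes a different route.

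The issue is your ``key observation'' that $\err(\theta|D_a)$ is an expected error over a posterior $\p[\cdot|D_a]$ inferred from the restricted sample set. In the paper's setup (inherited from \cite{feldman2020does}), $\err(\theta|D_a)$ is the \emph{group-conditional} error under the \emph{full} posterior $\p[\cdot|D]$, namely $\E_{\D\sim\p[\cdot|D]}\bigl[\p_\D(h_\theta(X)\neq Y\mid X\in D_a)\bigr]$. If you literally applied Theorem~\ref{thm:feldman} to $D_a$ as a standalone size-$n_a$ dataset with its own posterior $\p[\cdot|D_a]$, the resulting importance constants would depend on $n_a$ and the restricted prior, not on the full-data $\tau_l$ that appear in the statement of Proposition~\ref{prop:group}. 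So the direct reduction you sketch does not by itself produce the stated normalization with the original $\tau_l$.

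The paper instead first bounds the \emph{joint} event under the full posterior,
\[
\E_{\D\sim\p[\cdot|D]}\bigl[\p_\D(h_\theta(X)\neq Y,\ X\in D_a)\bigr]\ \ge\ \min_{\theta'}\err(\theta',X\in D_a)+\sum_{l}\tau_l\sum_{X\in D_a\cap S_l}\tilde\p[\tilde Y\neq Y\mid X],
\]
which is Feldman's inequality with the inner sum simply restricted to patterns in $D_a$. It then factors the left side as the product of the conditional error and $\E_{\D\sim\p[\cdot|D]}[\p_\D(X\in D_a)]$ (using independence of the drawn patterns), computes the latter directly from the definition of $\tau$ as $\sum_{X\in D_a}\tau_{l[X]}$, and divides both sides through. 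This division is what delivers the factor $\tau_l/\sum_{X\in D_a}\tau_{l[X]}$ while keeping the same full-data constants $\tau_l$. Your remark that $A$ is a deterministic function of $X$, so appearance counts and the sets $S_l$ agree whether computed in $D$ or in $D_a$, is correct and is used implicitly in this argument.
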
 
 Denote the following \textit{excessive generalization error} for group $a$:   
 $$
 \err^{+}_a(\theta|D):= \sum_{l \in [n]} \frac{\tau_l}{\sum_{X \in D_a} \tau_{l[X]} } \cdot \sum_{X \in D_a \cap S_l }\tilde{\p}[\tilde Y \neq Y|X].$$ 
 Importantly, the above error term captures the vital quantities that are interesting to our problem: (1) the relevant frequency $\frac{\tau_l}{\sum_{X \in D_a} \tau_{l[X]} }$ captures the importance of the pattern with different frequencies and (2) $\tilde{\p}[\tilde Y \neq Y|X]$ the label noise rate of sample pattern $X$.

To set up the discussion, 
suppose we have two groups $a, a'$. $a$ is the advantaged group with a smaller $\err^{+}_a(\pi,\theta|D)$; there is an $X_a \in D_a$ with a larger $l_a$. On the other hand, there is an $X_{a'} \in D_{a'}$, an $l_{a'}$-appearance sample.  We further assume that  $l_a > l_{a'}$ ($X_{a'}$ has a lower representation).  The rest of the discussion  will focus on the following generalization error disparity as the fairness metric:
 $$
 F(\theta):= %
 |\err^{+}_a(\theta|D) - \err^{+}_{a'}(\theta|D)|~.
 $$

 The excessive generalization error for each group can be viewed as the expected influence of a model $\theta$ on the test data for that particular group. So the rest of the analysis focuses on the impact of flipping a sample's label to the group's excessive generalization error and then $F(\theta)$.
 
\para{Overriding Labels ($Y$).}
On the high level, overriding a wrong label from the disadvantaged group $a'$ to the correct one will effectively reduce $\tilde{\p}[\tilde{Y} \neq Y|X]$ for some $X \in D_{a'}$, and therefore reduces the gap from it to the advantaged groups. 
The literature on influence functions \citep{koh2017understanding} has demonstrated its power to detect mislabelled samples.
But why would the influence function identify samples from the disadvantaged group and samples with wrong labels?

Consider a specific sample $X_{a'} \in D_{a'}$, and suppose its label is wrong. Overriding the wrong label to the correct label for this rare sample leads to a reduction in noise rate $\tilde{\p}[\tilde{Y} \neq Y|X]$ for $X_{a'}$. Therefore we know that overriding this ``rare sample" reduces $ \err^{+}_{a'}(\theta|D)$, and the disparity $F(\theta)$. On the other hand, overriding the label for $X_a$ from the privileged group reduces $\err^{+}_a(\theta|D)$ but this would further increase the gap $F(\theta)$. Therefore, flipping (\ie overriding) the wrong labels from the disadvantaged group leads to a larger drop in disparity. %

\para{Overriding Sensitive Attributes ($A$).}
Suppose $X_a \in D_a$ (from the privileged group) is identified to be overridden. After the counterfactual overriding, $X_a$  is overridden to $X_{a'}$ (from the disadvantaged group), we show the gap in the excessive generalization errors between $a$ and $a'$ is reduced as follows:

\textit{(1) Increase in generalization error for the privileged group}: For group $a$'s generalization error, since we are removing one sample from it, the \textit{importance} of $X_a$ drops from $\tau_{l_a}$ to $\tau_{l_a-1}$
as
$\tau_l$ monotonically increases w.r.t $l$ (recall $\tau_l$ implies the importance of a $l$-frequency sample, the higher $l$ is the more important it generally is).  
When $X_a$ is a cleaner example that $\tilde{\p}[\tilde{Y} \neq Y|X_a]$ is sufficiently small, especially smaller than the average noise rate $\tilde{\p}[\tilde{Y} \neq Y|X \in D_a]$ of the group $a$, removing one sample of it results in an increase in the average generalization error (Proposition \ref{prop:basic}). %

\textit{(2) Decrease in generalization error for the privileged group}: For group $a'$, because of the addition, the weight of $X_{a'}$ increases by $\tau_{l_{a'}+1} -  \tau_{l_{a'}}$.
Therefore, adding a cleaner sample to group $a'$ not only reduces $X_{a'}$'s empirical label noise rate $\tilde{\p}[\tilde{Y} \neq Y|X_a]$, but also increases the relative weight of $\tau_{l_{a'}}$. Again using Proposition \ref{prop:basic}, we know that increasing the weight of a smaller quantity will then reduce the average of the group.

To summarize the above, overriding $A$ effectively (1) increases $\err^{+}_a(\theta|D)$ (\ie increasing the excessive generalization error for the privileged group) and (2) decreases $\err^{+}_{a'}(\theta|D)$ (\ie decreasing the excessive generalization error for the disadvantaged group). Therefore the counterfactual overriding $A$ reduces the gaps in the excessive generalization errors between the two groups.

\subsection{Proof of Proposition \ref{prop:group}}

Recall we assume a simplified case where we train a parametric model $\theta$ that fully memorizes the training data that $R(\theta) = 0$, and therefore $\p[h_{\theta}(X) \neq Y] =\tilde{\p}[\tilde Y \neq Y|X]$. Following the proof from \citep{feldman2020does}, it is easy to show that 
\[
\E_{\D \sim \p[\cdot |D]}\left[  \p_{\D}(h_{\theta}(X) \neq Y, X \in D_a )\right] \geq \min_{\theta' \in \Theta}   \err(\theta', X \in D_a)+ \sum_{l \in [n]}\tau_l \cdot  \sum_{X \in D_a \cap S_l} \tilde{\p}[\tilde{Y} \neq Y|X] 
\]
This is done simply by restricting generalization error to focus on data coming from a particular subset $D_a$. Note that 
\[
\E_{\D \sim \p[\cdot |D]}\left[  \p_{\D}(h_{\theta}(X) \neq Y, X \in D_a)\right] = \E_{\D \sim \p[\cdot |D]}\left[  \p_{\D}(h_{\theta}(X) \neq Y| X \in D_a) \cdot\p_{\D}( X \in D_a)  \right] 
\]
Assuming the independence of the samples drawn, we have
\begin{align*}
    \E_{\D \sim \p[\cdot |D]}\left[  \p_{\D}(h_{\theta}(X) \neq Y, X \in D_a)\right] =\   &\E_{\D \sim \p[\cdot |D]}\left[  \p_{\D}(h_{\theta}(X) \neq Y| X \in D_a)\right]  \\ &\cdot \E_{\D \sim \p[\cdot |D]}\left[\p_{\D}( X \in D_a)  \right] 
\end{align*}
From the above, we derive that 
\begin{align}
    \E_{\D \sim \p[\cdot |D]}\left[  \p_{\D}(h_{\theta}(X) \neq Y| X \in D_a)\right] = \frac{ \E_{\D \sim \p[\cdot |D]}\left[  \p_{\D}(h_{\theta}(X) \neq Y, X \in D_a)\right] }{\E_{\D \sim \p[\cdot |D]}\left[\p_{\D}( X \in D_a)  \right] }~.\label{eqn:group}
\end{align}
According to the definition of $\tau$ in \citep{feldman2020does} we have 
\begin{align*}
    \E_{\D \sim \p[\cdot |D]}\left[\p_{\D}( X \in D_a)  \right] =& \ \E_{\D \sim \p[\cdot |D]}\left[\sum_{X \in D_a} \D( X)  \right]\\
    =&\sum_{X \in D_a}  \E_{\D \sim \p[\cdot |D]}\left[ \D( X)  \right] \\
    =& \sum_{X \in D_a}  \tau_{l[X]} \tag{Definition of $\tau$}
\end{align*}
Plugging the above back into Eqn \ref{eqn:group} gives 
 $$
    \err(\theta|D_a) \geq \min_{\theta' \in \Theta}  \err(\theta'|D_a) + \sum_{l \in [n]} \frac{\tau_l}{\sum_{X \in D_a} \tau_{l[X]} } \cdot \sum_{X \in D_a \cap S_l }\tilde{\p}[\tilde Y \neq Y|X].
$$

\subsection{Basic Theorem for Proposition \ref{prop:basic}}
We next prove the following:
\begin{proposition}
    For a set of non-negative numbers $\{b_1,...,b_N\}$ with their associated non-negative weights $\{w_1,...,w_N\}$ such that $\sum_{i=1}^N w_i = 1$. Denote the average as $\bar{b}:=\sum_{i=1}^N w_i b_i$. Then
    \begin{itemize}
        \item[(1)] For $b_i < \bar{b}$, change its weight from $w_i$ to $w'_i < w_i$, and every other weight stays unchanged s.t. $w'_j = w_j$. Given the following renormalization $w'_j = \frac{w'_j}{\sum_{i} w'_i}$, we have $\bar{b}':= \sum_{i} w'_i b_i > \bar{b}$. 
        \item[(2)] For any particular $b_i < \bar{b}$, change its $b_i$  to $b'_i < b_i$ and keep other $b_j, j \neq i$ unchanged that $b'_j = b_j$. Furthermore, change its weight from $w_i$ to $w'_i > w_i$, and every other weight stays unchanged s.t. $w'_j = w_j$. Given the following renormalization $w'_j = \frac{w'_j}{\sum_{i} w'_i}$, we have $\bar{b}':= \sum_{i} w'_i b_i < \bar{b}$.
    \end{itemize}
    \label{prop:basic}
\end{proposition}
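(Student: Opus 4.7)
The statement is a quantitative claim about how the weighted average $\bar{b} = \sum_i w_i b_i$ shifts under two specific perturbations of the pair $(b_i, w_i)$. My plan is a direct algebraic computation in each case: write down the renormalization constant $S$ explicitly, express $\bar{b}'$ in closed form as a function of $\bar{b}$ and the perturbation, and then compare the two by a one-line inequality.

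For part (1), only $w_i$ changes (downward to $w'_i < w_i$), so the sum of the unnormalized weights becomes $S = w'_i + \sum_{j \neq i} w_j = 1 - (w_i - w'_i)$, and a short calculation gives
\[
\bar{b}' = \frac{1}{S}\bigl(\bar{b} - (w_i - w'_i)\, b_i\bigr).
\]
The target $\bar{b}' > \bar{b}$ is then equivalent to $\bar{b} - (w_i - w'_i) b_i > S\bar{b} = \bar{b} - (w_i - w'_i)\bar{b}$, which follows immediately from $b_i < \bar{b}$ together with $w_i > w'_i$. Intuitively, shrinking the weight of a below-average value pulls the mean up, which is exactly what the calculation confirms.

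For part (2), both $b_i$ (downward to $b'_i$) and $w_i$ (upward to $w'_i$) change, so the unnormalized weights now sum to $S = 1 + (w'_i - w_i) > 1$, and
\[
\bar{b}' = \frac{1}{S}\bigl(\bar{b} + w'_i b'_i - w_i b_i\bigr).
\]
The claim $\bar{b}' < \bar{b}$ rearranges to $w'_i b'_i - w_i b_i < (w'_i - w_i)\bar{b}$, i.e.\ $w'_i(b'_i - \bar{b}) < w_i(b_i - \bar{b})$, which is an inequality between two negative numbers. Since $w'_i > w_i > 0$ and $b'_i - \bar{b} < b_i - \bar{b} < 0$ (the latter from $b'_i < b_i < \bar{b}$), multiplying the larger positive weight by the more negative deviation makes the left side strictly more negative, establishing the claim. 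The only point requiring care is tracking signs when comparing two negative quantities; I expect no substantive obstacle, and the whole argument collapses to a few lines per part.
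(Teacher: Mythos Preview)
Your proposal is correct and follows essentially the same direct algebraic verification as the paper: both reduce each part to the single inequality $b_i < \bar{b}$ (respectively $b'_i < b_i < \bar{b}$) after expanding the renormalized average. Your organization via the normalization constant $S$ is slightly more streamlined than the paper's, which instead computes each renormalized weight increment $w'_j - w_j$ explicitly and telescopes through the offset $b_i$, but the substance is identical.
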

\begin{proof}
To prove (1), we have
\begin{align*}
    \bar{b}' - \bar{b} &= \sum_j (w'_j - w_j)\cdot b_j \\
    &=\sum_{j \neq i}(w'_j - w_j)\cdot b_j + \left((1-\sum_{j\neq i}w'_j) -(1-\sum_{j\neq i}w_j)  \right) b_i\\
    &=\sum_{j \neq i}(w'_j - w_j)\cdot (b_j - b_i)
\end{align*}
Furthermore, let $\Delta = w_i - w'_i$, for $j \neq i$ we have:
\[
w'_j - w_j = \frac{w_j}{1-\Delta} - w_j = w_j \cdot \frac{\Delta}{1-\Delta}
\]
Therefore we have
\begin{align*}
    \sum_{j \neq i}(w'_j - w_j)\cdot (b_j - b_i) &= \frac{\Delta}{1-\Delta} \cdot \sum_{j \neq i} w_j (b_j - b_i)\\
    &=\frac{\Delta}{1-\Delta} \cdot \left( (\bar{b}-w_i\cdot b_i)-(b_i - w_i \cdot b_i) \right)\\
    &=\frac{\Delta}{1-\Delta} \cdot (\bar{b} -b_i) > 0
\end{align*}

To prove (2), we basically follow the same proof. The only difference is that now let $\Delta = w'_i - w_i$, then for $j \neq i$:
\[
w'_j - w_j = \frac{w_j}{1+\Delta} - w_j = -w_j \cdot \frac{\Delta}{1+\Delta}
\]
Then we have
\begin{align*}
  \bar{b}' - \bar{b} &= \sum_j (w'_j - w_j)\cdot b_j + w'_i \cdot (b'_i - b_i)\\
  &= 
    \sum_{j \neq i}(w'_j - w_j)\cdot (b_j - b_i) +w'_i \cdot (b'_i - b_i)\\
    &= -\frac{\Delta}{1+\Delta} \cdot \sum_{j \neq i} w_j (b_j - b_i)+w'_i \cdot (b'_i - b_i)\\
    &=-\frac{\Delta}{1+\Delta} \cdot \left( (\bar{b}-w_i\cdot b_i)-(b_i - w_i \cdot b_i) \right)+w'_i \cdot (b'_i - b_i)\\
    &=-\frac{\Delta}{1+\Delta} \cdot (\bar{b} -b_i)+w'_i \cdot (b'_i - b_i) < 0
\end{align*}
\end{proof}

\section{Additional Experimental Results and Details}
\label{app:exp}

\subsection{Dataset Details}
\label{app:dataset}

We include the details of datasets in the following:
\squishlist
\item  \textbf{Synthetic}: We generate synthetic data with the assumed causal graphs in Figure~\ref{fig:causal}, and therefore we have the ground-truth counterfactual samples. See Appendix~\ref{app:dataset} for the dataset generation process. Model: logistic regression.
\item \textbf{COMPAS}: Recidivism prediction data (we use the preprocessed tabular data from IBM’s AIF360 toolkit~\citep{bellamy2019ai}). Feature $X$: tabular data. Label $Y$: recidivism within two years (binary). Sensitive attribute $A$ (removed from feature $X$): race (white or non-white). Model: logistic regression. When overriding $X$, we choose to flip the binary feature (age $>45$ or not) in $X$.
\item \textbf{Adult}: Income prediction data (we use the preprocessed tabular data from IBM’s AIF360 toolkit~\citep{bellamy2019ai}). Feature $X$: tabular data. Label $Y$: if income $>50K$ or not. Sensitive attribute $A$ (removed from feature $X$): sex (male or female). Model: logistic regression. When overriding $X$, we choose to flip the binary feature race (white or non-white) in $X$.
\item \textbf{CelebA}: Facial image dataset.  Feature $X$: facial images. Label $Y$: attractive or not (binary). Sensitive attribute $A$: gender (male and female). Model: ResNet18~\citep{he2016deep}. When overriding $X$, we choose to flip the binary image-level label ``Young.''

\squishend

The synthetic data is generated using a DAG with specified equations as follows:
\begin{align*}
   & X_1 \sim \text{Normal}(0,1)\\
   & A \sim \text{Bernoulli}(0.3)\\
   & X_2 \sim \text{Normal}(A, 3)\\
   & Z_1 \sim \text{Normal}(0,1)\\
   & X_3 \sim \text{Normal}(2\cdot Z_1-1, 0.1)\\
   &     X_4 \sim  \text{Bernoulli}(0.1)\\
   &     Y = sign(5 \cdot X_1 \cdot A + 0.2\cdot X^3_2+0.5 \cdot A+0.3 \cdot X_4 - X_3)
\end{align*}

We use $X_1, X_2, X_3, X_4, A$ as features, $A$ as sensitive attributes, and $Y$ as labels.

We split all tabular datasets randomly into $70\%$ training, $15\%$ validation, and $15\%$ test set. We use the original data splitting in CelebA.

\subsection{Experiment Details}
\label{app:exp_details}

We train the logistic regression on synthetic, Adult, and COMPAS using SGD with a learning rate $0.01$. For CelebA, we train ResNet18 using Adam with a learning rate $0.001$.

\para{Generating Image Counterfactual Samples.} When generating image counterfactual samples, we find directly using the generated images from W-GAN does not lead to a satisfactory mitigation performance because the distance between the counterfactual sample and the original sample is too small to impose a change that is large enough to improve fairness (tabular data has no such problem). Therefore we use a heuristic in CIF-based mitigation for image data. Using overriding $X$ as the example, when we map a sample's feature from $X|C = c$ to $X|C = c'$, we get the counterfactual feature $\hat{x}_i = G_{c_i \rightarrow \hat{c}_i}(x_i)$. We then search from the real examples $X|C = c'$ to find the nearest neighbor (in the original model's feature space) of $\hat{x}_i$, \ie
\begin{equation}
    \hat{x}_i^{'} = \argmin_{x \sim X |C = c' } ||g_{\hat{\theta}}(\hat{x}_i) - g_{\hat{\theta}}(x)||^2
\end{equation}
where $g_{\hat{\theta}}$ is the feature extractor of the original model. That is to say, we search from the pool of real samples belonging to the target group closest to the generated fake sample. Since now the counterfactual feature is another real sample in the training data, it is directly removing a sample and replace with another real sample, which induces a larger change than replacing with a fake sample that needs to be reasonably close to the original sample in the W-GAN's training constraint. The resulting counterfactual sample is $\hat{z}^{tr}_{i}(\hat{c}_i) = (\hat{x}_i^{'}, h_{\hat{\theta}}(\hat{x}_i^{'}), a_i, \hat{c}_i)$. In experiments, we cap the nearest neighbor search space to be $10\%$ of the target group size to reduce the computational cost.

\subsection{Additional Mitigation Results}
\label{app:mitigation}

\begin{figure*}[t]
  \begin{minipage}[t]{\linewidth}
    \centering
  \includegraphics[width=\linewidth]{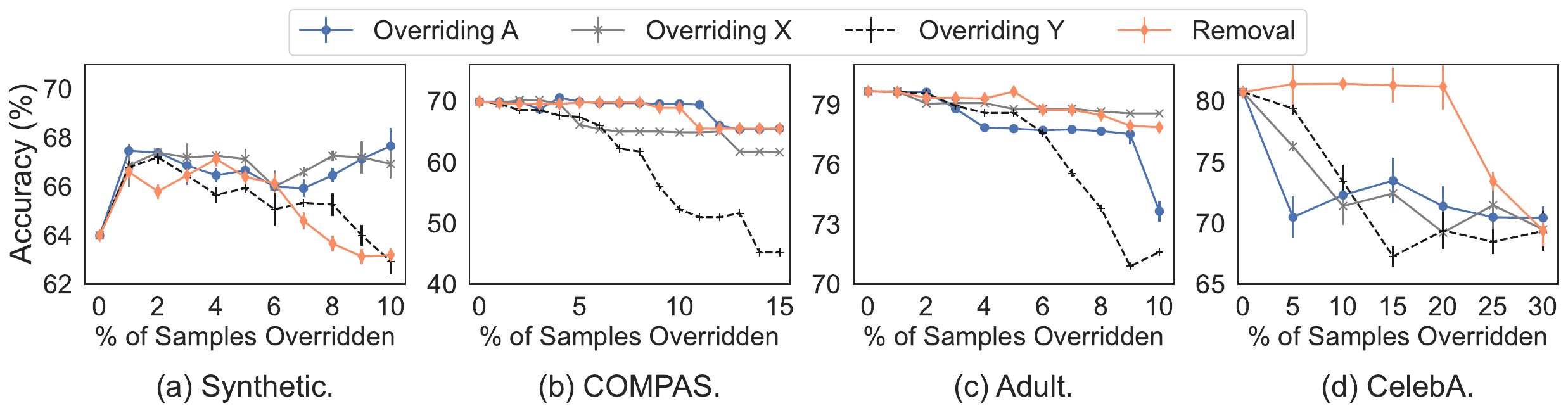}
  \caption{Model accuracy with CIF-based mitigation using fairness measure Demographic Parity (DP).}
  \label{fig:miti_acc_DP}
  \end{minipage}
  \begin{minipage}[t]{\linewidth}
    \centering
    \includegraphics[width=\linewidth]{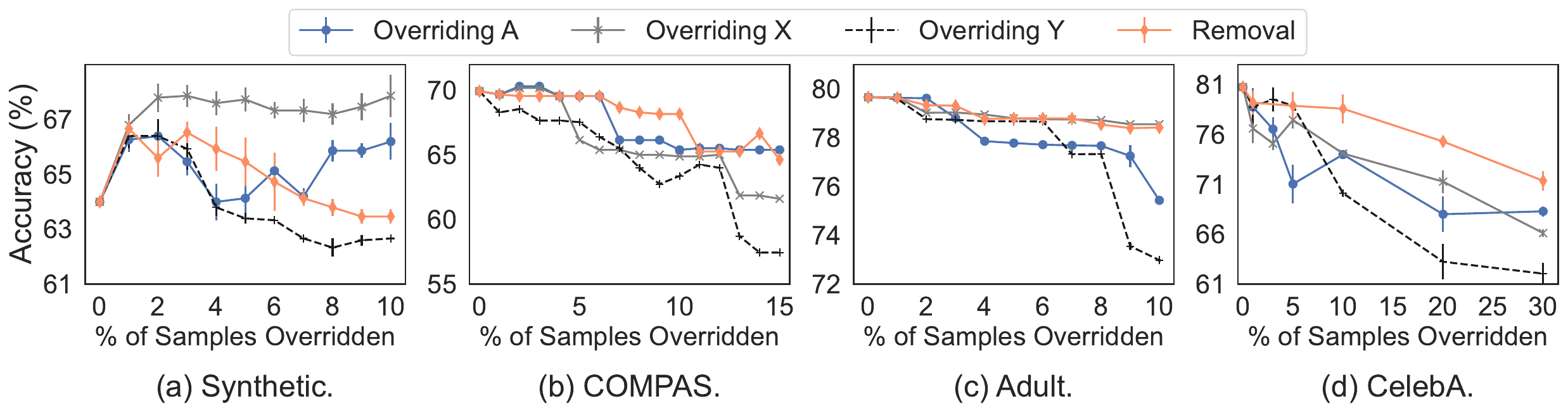}
    \caption{Model accuracy with CIF-based mitigation using fairness measure Equality of Opportunity (EOP).}
    \label{fig:miti_acc_eop}
  \end{minipage}
    \begin{minipage}[t]{\linewidth}
    \centering
    \includegraphics[width=\linewidth]{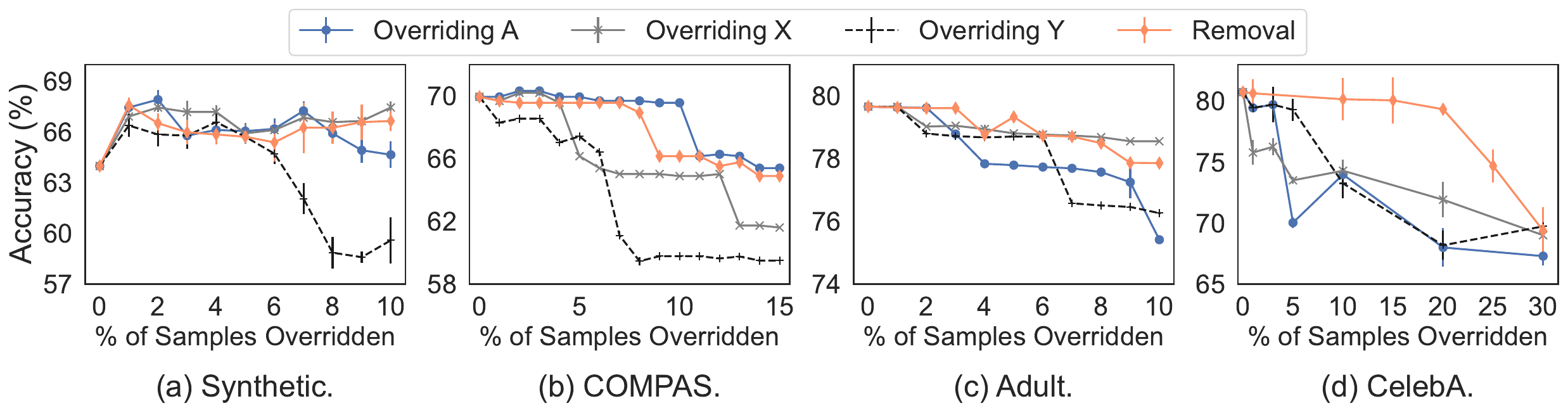}
    \caption{Model accuracy with CIF-based mitigation using fairness measure Equality of Odds (EO).}
    \label{fig:miti_acc_eo}
  \end{minipage}
\end{figure*}

Figure~\ref{fig:miti_acc_DP}-\ref{fig:miti_acc_eo} show the model accuracy after applying CIF-based mitigation.

\subsection{Impact of Label Noise}
\label{app:label_noise}
We add label noise to the synthetic data, and Table~\ref{tab:syn_label} shows the effectiveness of Y-overriding when we override 10\% top samples flagged by our method.

\begin{table}[h]
\centering
\begin{tabular}{@{}lllllll@{}}
\toprule
Noisy Rate & 0\% & 5\% & 10\% & 15\% & 20\% & 25\% \\ \midrule
Demographic Disparity & 12.4\% & 2.1\% & 6.5\% & 1.7\% & 0.05\% & 2.2\% \\ \bottomrule
\end{tabular}
\caption{Effectiveness of Y-overriding on synthetic data when label noise exists.}
\label{tab:syn_label}
\end{table}

\subsection{Accuracy of Estimated Influence value}
\label{app:infl_val}

Figure~\ref{fig:infl_est_compass} plots influence value vs. the actual difference in fairness loss (DP) on COMPAS dataset. For the first data point, we remove 100 training samples with the largest influence value from the training set, retrain the model, and compute the actual change of fairness loss (eq.\ref{eq:fairDP}) from the original model. For the next data point, we pick samples with the next 100 largest influence values and so on. We can see the relationship between our estimated influence and the actual change in fairness is largely linear, meaning our influence value can estimate the fairness change reasonably well.

\subsection{Distribution of Influence Values}
\label{app:infl_dist}
Figure~\ref{fig:infl_dist} shows the distribution of influence values computed on COMPAS corresponding to three fairness metrics.

\begin{figure}[t]
    \centering
  \includegraphics[width=\linewidth]{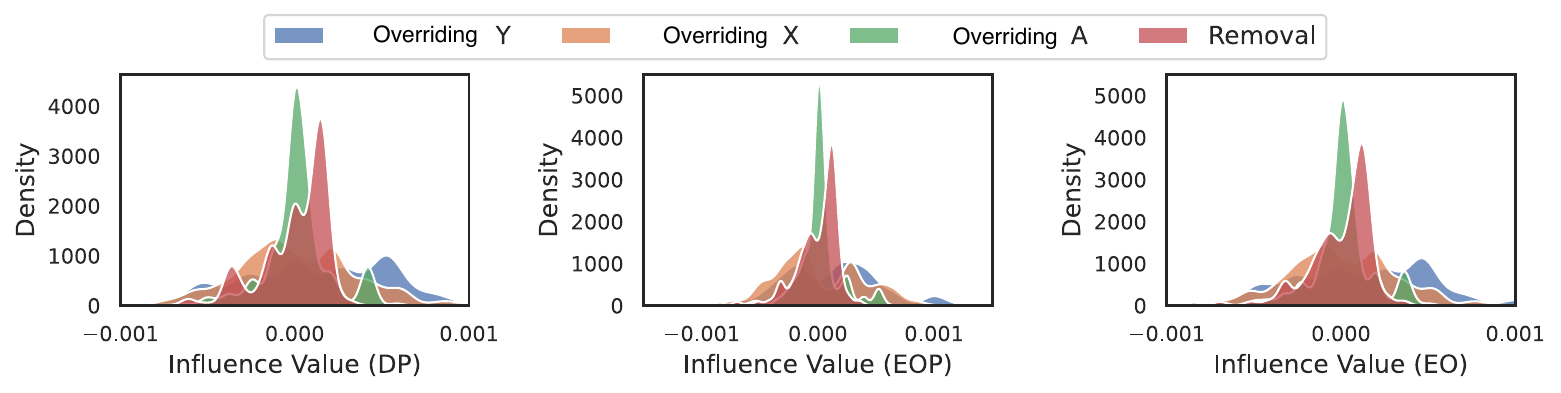}
  \caption{Distribution of influence values computed on COMPAS across three fairness metrics.}
  \label{fig:infl_dist}
\end{figure}

\subsection{Details of Experiments on Additional Applications}
\label{app:abl}

\begin{table}[!t]
\centering
\begin{tabular}{l|l|l}
\hline
      & A = 0 & A = 1 \\ \hline\hline
        Y = 0 & 0.45  & 0.35  \\ \hline
        Y = 1 & 0.15  & 0.55  \\ \hline
\end{tabular}
\caption{Group-dependent label noise rate added in the training samples in Adult data.}
\label{tab:adult_flip_y}
\end{table}

\para{Fixing Mislabelling.} The group-dependent label noise rate we add to the Adult training dataset is shown in Table~\ref{tab:adult_flip_y}. We follow a similar experimental setting in~\citep{wang2021fair}. After the label overriding, the bias increases significantly: DP increases from $16.1\%$ to $49.6\%$, EOP increases from $31.4\%$ to $77.3\%$, and EO increases from $19.1\%$ to $63.3\%$.

We flag samples by choosing samples with top influence when $Y$ is overridden and report the precision ($\frac{\#\text{flipped labels correctly detected}}{\#\text{flagged labels}}$) of our detection.

\para{Defending against Poisoning Attacks.} After the training samples are poisoned, the model unfairness increases as follows: DP increases from $16.1\%$ to $61.4\%$, EOP increases from $31.4\%$ to $69.4\%$, and EO increases from $19.1\%$ to $65.2\%$.

\para{Resampling Imbalanced Representations.} After artificially unbalancing the training samples, the fairness gap increases as follows: DP increases from $16.1\%$ to $42.6\%$, EOP increases from $31.4\%$ to $63.7\%$, and EO increases from $19.1\%$ to $47.2\%$.

\subsection{Generated Counterfactual Samples}

\begin{figure}[t]
    \centering
  \includegraphics[width=\linewidth]{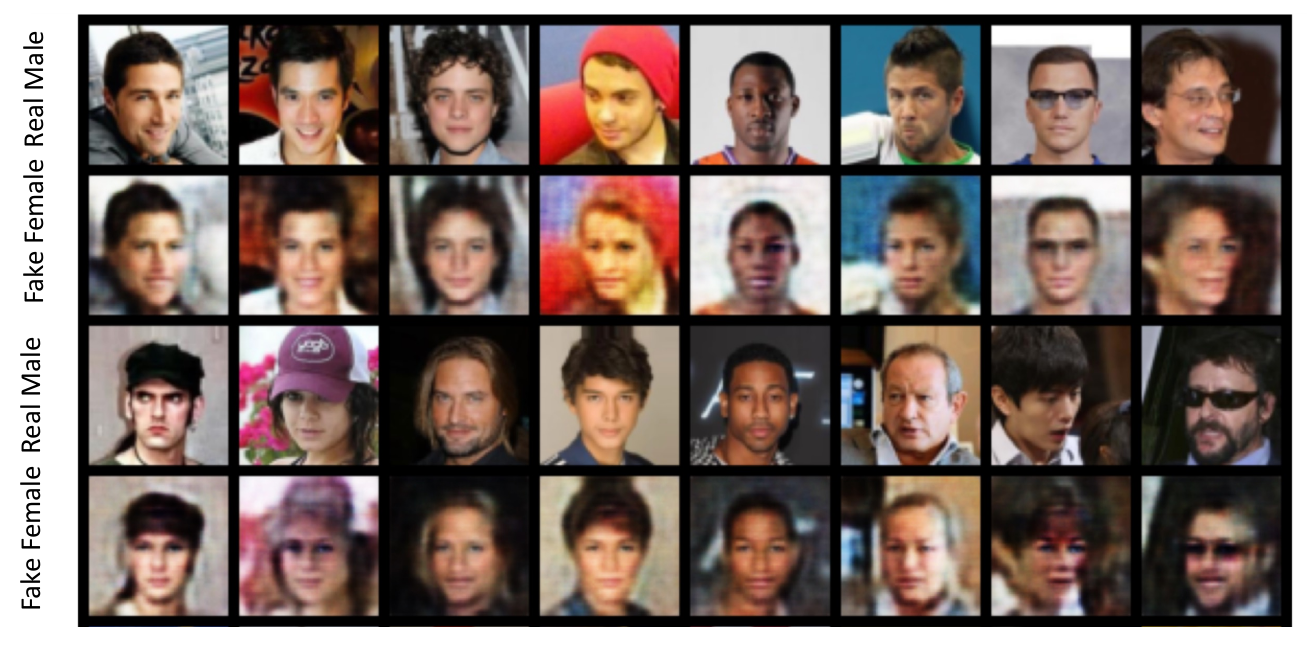}
  \caption{W-GAN generated images that map from male to female in CelebA.}
  \label{fig:gan_m2f}
\end{figure}

\begin{figure}[t]
    \centering
  \includegraphics[width=\linewidth]{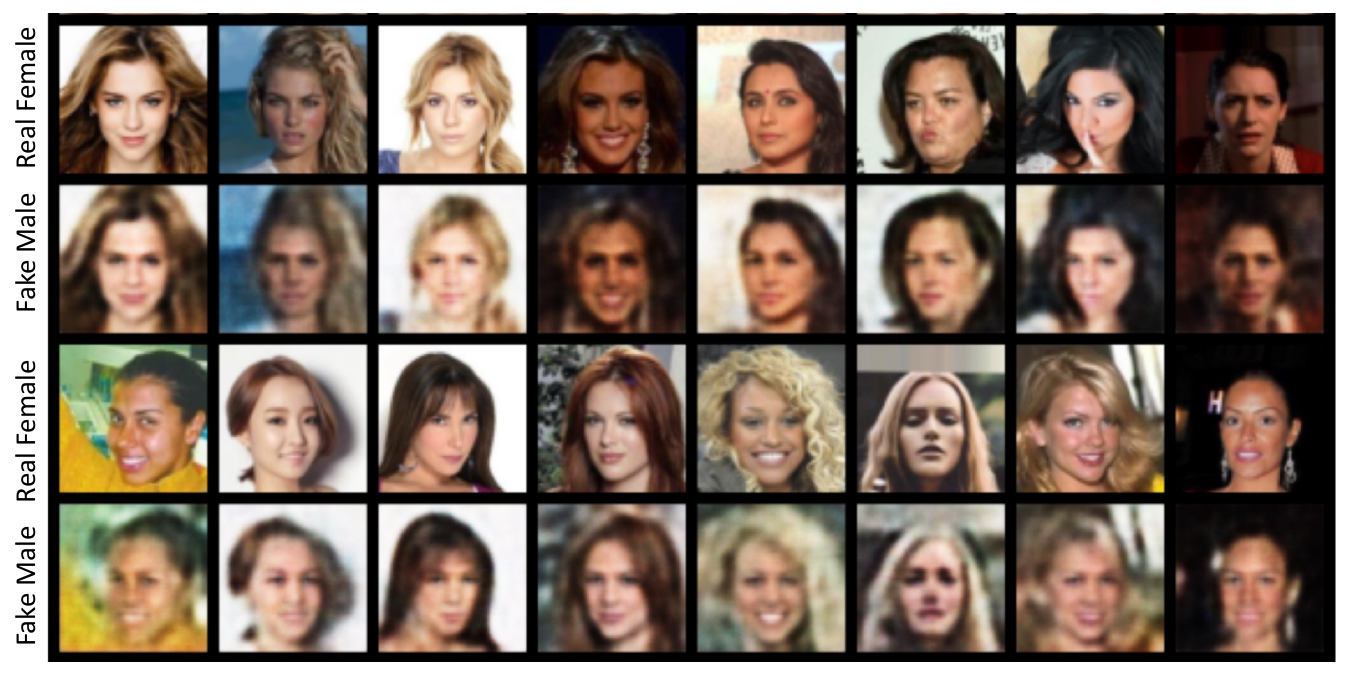}
  \caption{W-GAN generated images that map from female to male in CelebA.}
  \label{fig:gan_f2m}
\end{figure}

Figure~\ref{fig:gan_m2f} and~\ref{fig:gan_f2m} show some random examples of generated images in CelebA when overriding $A$.

\end{document}